\newcolumntype{P}[1]{>{\centering\arraybackslash}p{#1}} 
\definecolor{Gray}{gray}{0.85}
\newcommand{\cmark}{\textcolor{OliveGreen}{\ding{51}}}%
\newcommand{\xmark}{\textcolor{red}{\ding{55}}}%
\newtheorem{lemma}{Lemma}
\definecolor{darkred}{RGB}{128,0,0}
\definecolor{flashyred}{RGB}{255,0,0}
\newcommand{\ourtitle}{Non-Uniform Post-Training Quantization via Power Exponent Search}
\begin{document}

\title{NUPES : \ourtitle}

\author{Edouard~Yvinec,
        Arnaud~Dapogny
        and~Kevin~Bailly
\IEEEcompsocitemizethanks{\IEEEcompsocthanksitem E. Yvinec is a PhD student at Datakalab, 114 boulevard Malesherbes 75017 Paris and Sorbonne Université, CNRS, Institut des Systèmes Intelligents et de Robotique, ISIR, F-75005 Paris, France.\\
\IEEEcompsocthanksitem A. Dapogny is a ML researcher at Datakalab.
\IEEEcompsocthanksitem K. Bailly is the Head of research at Datakalab and associate professor at Sorbonne Université.}}

\markboth{pre-print version}%
{Yvinec \MakeLowercase{\textit{et al.}}: \ourtitle}
\IEEEtitleabstractindextext{%
\begin{abstract}
Deep neural network (DNN) deployment has been confined to larger hardware devices due to their expensive computational requirements. This challenge has recently reached another scale with the emergence of large language models (LLMs). In order to reduce both their memory footprint and latency, a promising technique is quantization. It consists in converting floating point representations to low bit-width fixed point representations, usually by assuming a uniform mapping onto a regular grid. This process, referred to in the literature as uniform quantization, may however be ill-suited as most DNN weights and activations follow a bell-shaped distribution. This is even worse on LLMs whose weight distributions are known to exhibit large, high impact, outlier values. In this work, we propose an improvement over the most commonly adopted way to tackle this limitation in deep learning models quantization, namely, non-uniform quantization. NUPES leverages automorphisms to preserve the scalar multiplications. Such transformations are derived from power functions. However, the optimization of the exponent parameter and weight values remains a challenging and novel problem which could not be solved with previous post training optimization techniques which only learn to round up or down weight values in order to preserve the predictive function. We circumvent this limitation with a new paradigm: learning new quantized weights over the entire quantized space. Similarly, we enable the optimization of the power exponent, \textit{i.e.} the optimization of the quantization operator itself during training by alleviating all the numerical instabilities. 
The resulting predictive function is compatible with integer-only low-bit inference. We show the ability of the method to achieve state-of-the-art compression rates in both, data-free and data-driven configurations. Our empirical benchmarks highlight the ability of NUPES to circumvent the limitations of previous post-training quantization techniques on transformers and large language models in particular.
\end{abstract}

\begin{IEEEkeywords}
Deep Learning, Quantization, post-training, Machine Learning,  Neural Networks, Large Language Models.
\end{IEEEkeywords}}

\maketitle

\IEEEdisplaynontitleabstractindextext
\IEEEpeerreviewmaketitle

\IEEEraisesectionheading{\section{Introduction}\label{sec:introduction}}

%
%
%
%
\IEEEPARstart{T}{he} need for effective deep neural networks (DNNs) compression and acceleration techniques has grown as the computational requirements of DNNs increased. In particular, the recent surge of large language models (LLMs \cite{zhang2022opt}), characterized by their billions of parameters, has brought their specificities to the challenge of efficient inference.
To address the limitations on DNN deployment, quantization has become one of the crucial steps to DNNs deployment \cite{cheng2017survey}, especially on edge and low power devices. In its most general formulation, quantization consists in the conversion of large floating point representations to low-bit fixed point representations. As a result, the quantized model has a lower memory footprint as well as an improved latency. 

\begin{figure}
    \centering
    \includegraphics[width = \linewidth]{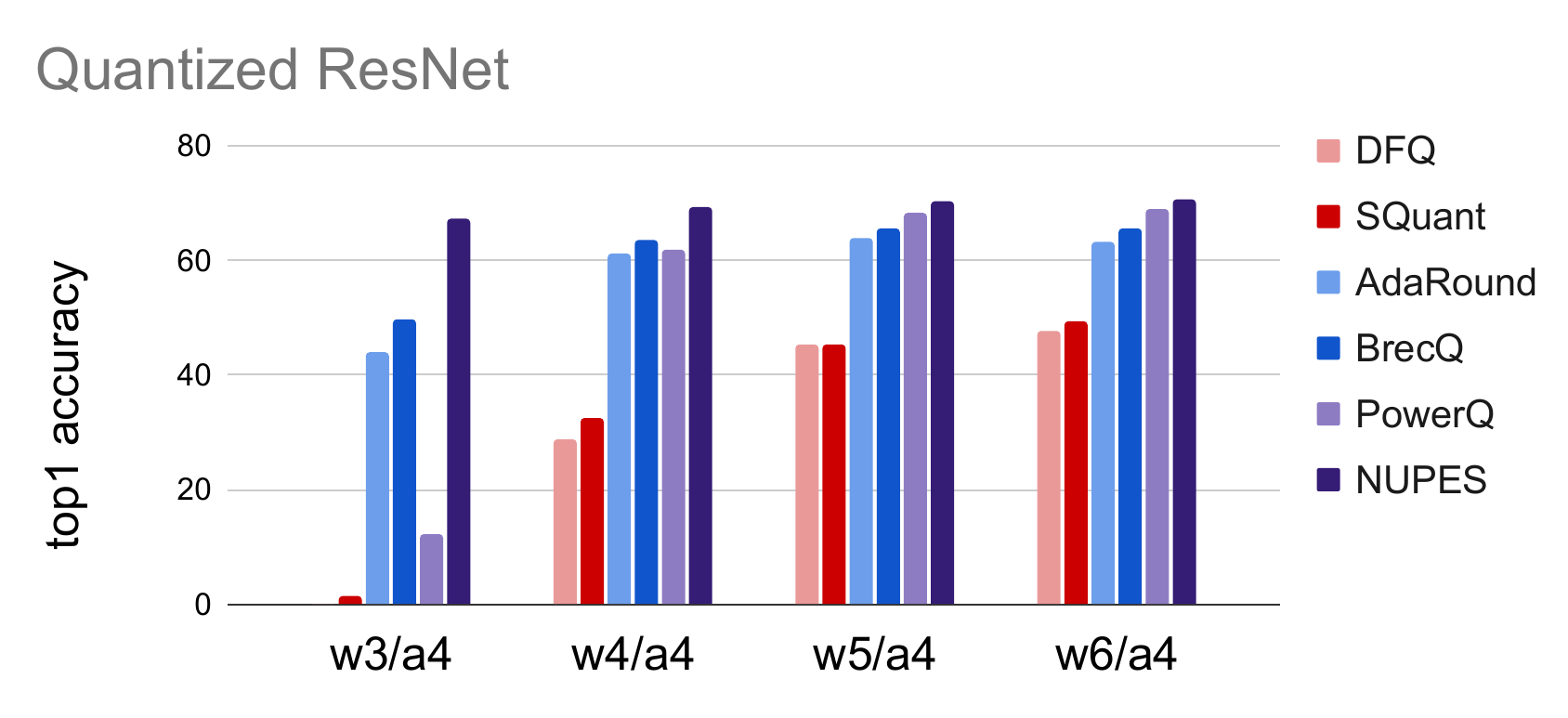}
    \includegraphics[width = \linewidth]{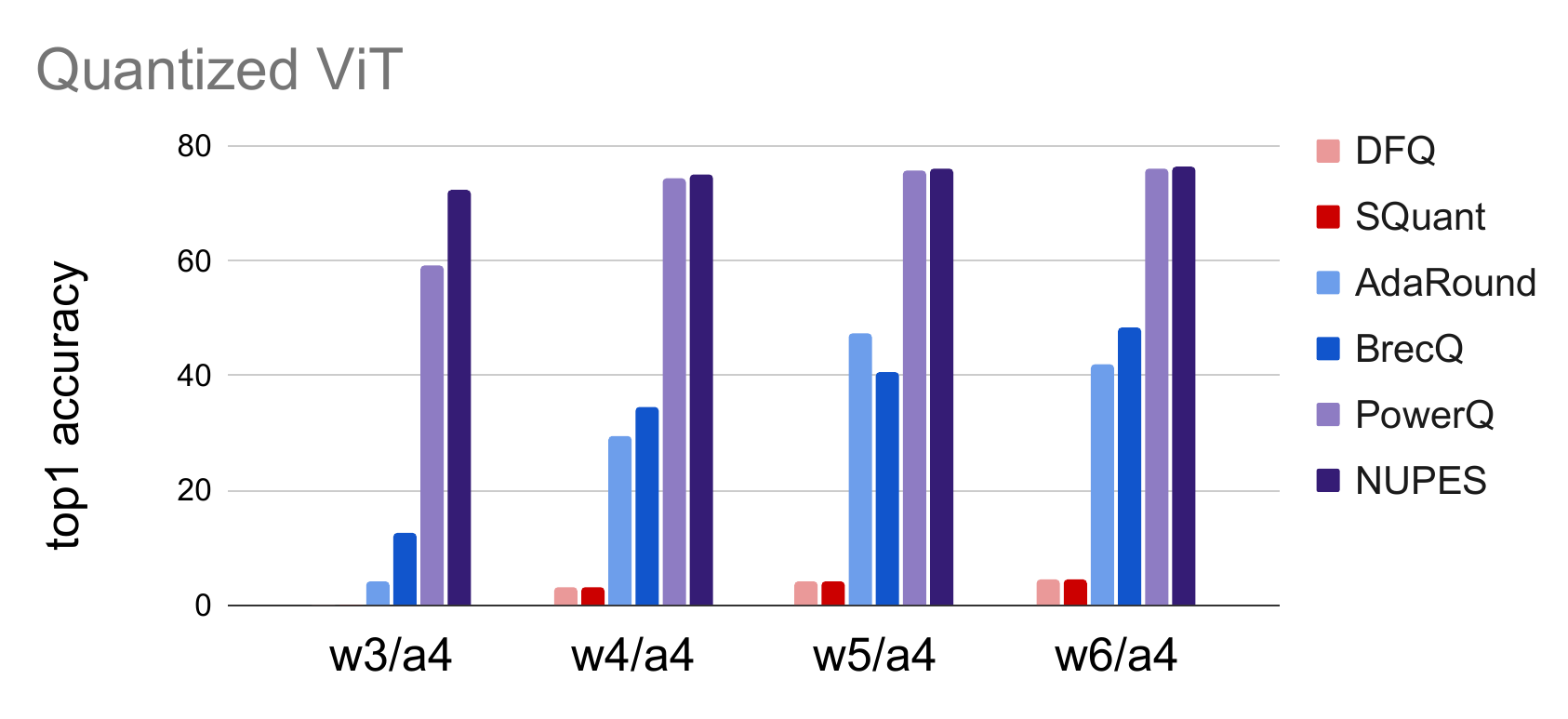}
    \caption{NUPES unlocks near full-precision accuracy in W4/A4 across ConvNets and Transformers. We can observe that while other methods are either more effective on ConvNets or Transformers, NUPES achieves state-of-the-art results on both use-cases.}
    \label{fig:main_figure}
\end{figure}

As defined by Nagel \textit{et al.} \cite{nagel2019data}, quantization techniques are classified based on their data requirements. In its most practical setup from the user perspective, but most challenging in terms of compression, quantization only leverages the weight values of a pre-trained model. Data-free quantization techniques \cite{nagel2019data,zhao2019improving,squant2022,yvinec2022spiq} are motivated by the growing concerns regarding data privacy as well as their scalability with respect to the model size. On the other end of the spectrum, quantization aware training techniques \cite{courbariaux2015binaryconnect,courbariaux2016binarized,zhang2022pokebnn} enable the quantization of ConvNets into binary representations at the cost of a full retraining burdened by extra operations. In order to offer more trade-offs in terms practical usage, gradient-based post training quantization (GPTQ) \cite{nagel2020up,li2021brecq} was introduced and consists in learning the quantized weight values on a small calibration set. In practice, it is usually limited to the optimization of the rounding operation (rounding up or down).

As opposed to most quantization methods \cite{nagel2019data,zhang2022pokebnn,li2021brecq,yvinec2022spiq} which, for the sake of practicality, map floating point values to an evenly spread, discrete space, non-uniform quantization achieves a tighter fit to the target distribution by using a non-uniform spread quantized target space. However such methods \cite{banner2019post,miyashita2016convolutional} require custom implementations that significantly shift away from uniform quantization.
In our previous work, PowerQuant \cite{yvinec2023powerquant}, we proposed a novel non-uniform quantization technique which both massively improved the performance of data-free quantization and tackled the requirement of the aforementioned methods for custom operations.
As we wanted to map multiplications to multiplications,  we searched for the best possible quantization operator that preserves the mathematical operations among automorphisms of $(\mathbb{R}^*_+,\times)$. In practice, we showed that it boiled down to the optimization of the exponent parameter of a power function with respect to the error introduced by quantization. 

In this work, we propose NUPES, a novel post-training non-uniform quantization to address the limitations of power quantization and study its performance on large language models. First, we adapt PowerQuant to the GPTQ framework \cite{nagel2020up}. While other GPTQ methods optimize a continuous value $\epsilon \in [0;1]$ in the quantized space as a way to learn the rounding operation, this cannot be applied to PowerQuant. The challenge arises from the disparity in ranges, for values close to zero, a shift of $1$ in the quantized space is negligible in the full precision space, while a change of $1$ for larger values leads to a substantial change in the predictive function. To circumvent this limitation, we draw inspiration from quantization aware training in order to design a novel, robust way to learn $\epsilon \in \mathbb{Z}$.
This new method, dubbed NUPES, enables us to reach new levels of compression while maintaining the accuracy of the original method as shown on Figure \ref{fig:main_figure}. While PowerQuant already outperformed GPTQ methods on transformers, NUPES further improves it and extends this ability to convolutional neural networks.
Second, we propose to alleviate a limitation of PowerQuant: the search for the power exponent. In the original method, we proposed to optimize a single value for the whole network based on the weights. In NUPES, we propose to learn the exponent parameter per-layer through stochastic gradient descent \cite{ruder2016overview} during the GPTQ process which results in more flexibility and paves the way towards the integration of PowerQuant in a full quantization aware training. Third, we demonstrate the particular efficiency of PowerQuant regarding transformers quantization and outliers handling. As transformers grow in size, they become more challenging to quantize due to the emergence of outliers \cite{dettmers2022llm}. These values stretch the distribution range, which leads to a significant part of the uniform target space being wasted. Consequently, we provide an extensive study on the ability of PowerQuant to efficiently address this pitfall on large transformers and large language models (LLMs).

\section{Related Work}

\subsection{Quantization}
In this section, we describe the current state-of-the-art in artificial neural network quantization. As our goal is both memory footprint and latency reduction, we will omit approaches that are designed for weights storage (on disk not on RAM) only \cite{chen2015compressing,gong2014compressing,zhou2017incremental}.

\subsection{Non-Uniform Quantization}
In general, mapping a non-uniform continuous distribution to a discrete evenly spread space linearly is unlikely to achieve the best usage of the target quantized space. To alleviate this limitation, non-uniform quantization has been studied \cite{banner2019post,hubara2016binarized,jeon2020biqgemm}. Among these techniques, code-base approaches \cite{banner2019post,jeon2020biqgemm} usually come at the price of complex implementations at runtime. The remaining methods alter the quantization operator \cite{miyashita2016convolutional,zhou2017incremental}. In particular, Zhang \textit{et al.} \cite{zhang2021training} proposed a logarithm-based quantization mechanism. Hence, the values in the quantized space no longer represent scalar values for a multiplication but rather an exponent for a bit-shift which is applied to the uniformly quantized inputs. Consequently, log-quant methods require a change in the nature of the supported mathematical operations and are bound to a low support until their performance justify investing in new inference instruction sets. In this work, we propose an alternative to these two non-uniform paradigms, namely NUPES, which consists in the search for quantized weights and a quantization operator defined by a power exponent which maps scalar multiplications to scalar multiplications. Hence, by preserving the mathematical operations, NUPES is easier to leverage. We adapt the method for gradient-based optimizations, thus tremendously improving its performance. 

\subsection{Gradient-based Post-Training Quantization}
The surge of large language models and the struggle to deploy them efficiently has sparked a novel interest for post-training quantization. For instance, a quantization aware training \cite{zhang2022pokebnn} on a small model such as ResNet 50 \cite{he2016deep} takes 7 days on 8 gpus A100. In this context, Nagel \textit{et al.} \cite{nagel2020up} pioneered with the so-called AdaRound method, that consists in two steps. First, quantizing the weights and activations in a naive fashion \cite{krishnamoorthi2018quantizing} with a rounding step for the activations and a flooring step for the weights. Second, using a calibration set (subset from the training set) in order to learn the rounding $\epsilon \in [0;1]$ using stochastic gradient descent. This method is performed layer per layer and uses as ground truth the intermediate features of the original, full-precision, model. Since its publication, several works \cite{li2021brecq,wei2022qdrop,liu2023pd} aimed at improving the accuracy of the quantized model in the low bit setup. The most noticeable one, BrecQ \cite{li2021brecq}, groups layers during the optimization step based on the architecture. For instance, all the layers in a residual block \cite{he2016deep} or a transformer block \cite{vaswani2017attention} are optimized at once, thus reducing the duration of the process while improving the final accuracy. However, non-uniform quantization is not compatible with GPTQ methods in their current form. NUPES solves this problem and achieves state-of-the-art accuracy on most commonly used architectures by a) generalizing the rounding search and b) optimizing over the power exponent, leading to superior performance. In particular, LLMs and outliers.

\subsection{Quantization of Large Language Models}
The rise of large language models has brought its own share of challenges in terms of quantization. In particular, the presence of outliers among weight and activations \cite{dettmers2022llm}, has proven to lead to significant accuracy loss when performing previously introduced state-of-the-art quantization schemes such as DFQ \cite{nagel2019data} or SQuant \cite{squant2022}. This phenomenon is illustrated in Figure \ref{fig:outliers} where an outlier is defined as the number of standard deviations from the mean absolute weight value. We observe that LLMs have outliers that are over 17 standard deviations away from the mean (more than the number of values that can represented by an int4) while ResNet architectures are bounded by 8 standard deviations. Consequently, such neurons will see most of their values quantized to zero almost as if they were quantized in ternary values. Furthermore, due to the size of the models, even efficient quantization methods such as AdaRound \cite{nagel2020up} fail to efficiently scale and introduce a expensive processing time, taking a few days to quantize a mid-sized LLM. Most recent works \cite{dettmers2023qlora,frantar2023optq}, that tackle weight quantization of LLMs, all leverage a new quantization granularity, namely group-wise quantization, introduced in nuQmm \cite{park2022nuqmm}. In our experiments, we demonstrate that power quantization in general enables state-of-the-art performance on the challenging LLMs.

\section{Methodology}
Let's consider $F$, a trained neural network defined by its layers ${(f_l)}_{l\in\{1,...,L\}}$ of real valued weight tensors ${(W_l)}_{l\in\{1,...,L\}}$. Formally, a quantization operator $Q$ is a transformation that maps all the elements $x\in\mathbb{R}$ of any tensor $X$ to a quantized interval $[- 2^{b-1} ; 2^{b-1} -1] \cap \mathbb{Z}$. Hence, the computation of $Q$ involves a rounding operation that we note $\lfloor \cdot \rceil$. Furthermore, in order to cover the entire quantized space, the quantization process $Q$, almost systematically, involves a scaling step. Consequently, we define the baseline operator $Q$ as
\begin{equation}\label{eq:baseline_operator}
    Q(x) = \left\lfloor\frac{x}{ s(X) }\right\rceil \quad \text{with } s(X) = \frac{\max_{x\in X}\{|x|\}}{2^{b-1}-1}
\end{equation}
The $Q$ operator comes with two limitations. First, by definition any quantization operator introduces an information loss as $Q^{-1}(Q(X)) = s(X) \times Q(X) \neq X$. Worse, given that in practice $X$ will follow a non-uniform distribution, the use of a linear scaling transformation is likely to be sub-optimal. Intuitively, most values in $X$ will be closer to zero than the edge of their support in $\mathbb{R}$ while $Q$ gives as much precision to every piece of that support. This phenomenon is illustrated in Fig \ref{fig:intuition} (a). Second, by definition of the scaling term $s$, the quantization process is very sensitive to outliers in $X$. Any such outlier would either be clipped to fit the quantized space or stretch the support out, leading to many non-zero values being quantized to zero as illustrated in Fig \ref{fig:intuition} (b).
In our previous work, we introduced PowerQuant \cite{yvinec2023powerquant}, a novel quantization operator that tackles both of these issues at once. 

\begin{figure}
    \centering
    \includegraphics[width = \linewidth]{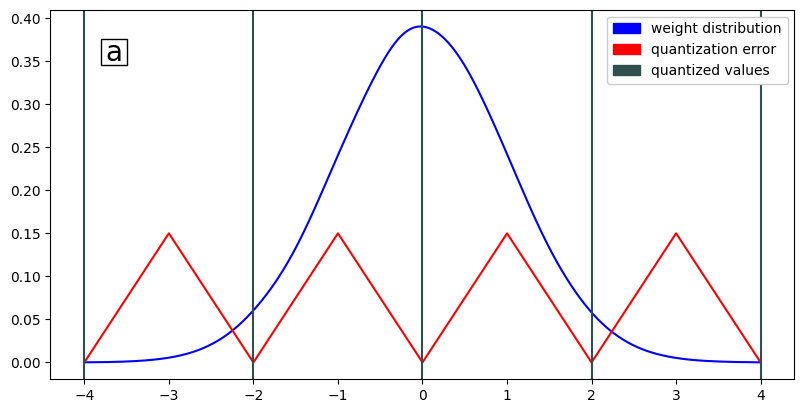}
    \includegraphics[width = \linewidth]{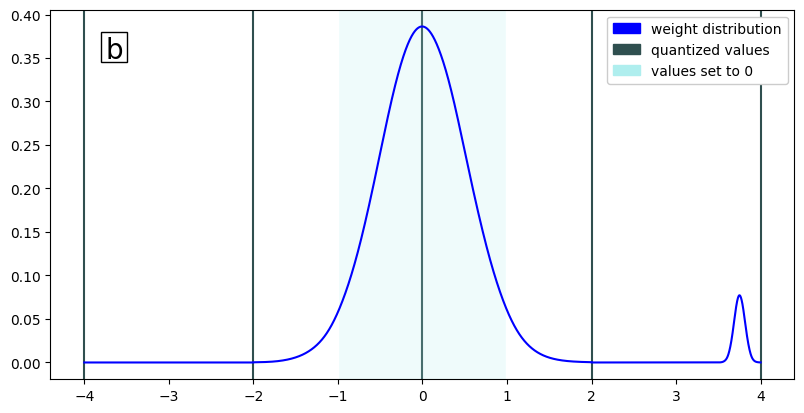}
    \caption{Illustration of the two pitfalls of naive quantization. First (a), the quantization error is uniformly distributed regardless of the prior weight distribution (Gaussian). In short, extreme values will have the same quantization error as redundant values despite their much rarer occurrences. Second (b), outliers stretch the distribution out and lead to zero quantization of most of the weight values.}
    \label{fig:intuition}
\end{figure}

\subsection{PowerQuant}\label{sec:powerquant_methodo}
We generalize the definition of a quantization operator $Q$ from equation \ref{eq:baseline_operator} through transformations $t$ of the tensor $X$ to quantize as
\begin{equation}\label{eq:general_operator}
    Q(x) = \left\lfloor\frac{t(x)}{ s(t(X)) }\right\rceil \quad \text{with } t:\mathbb{R}\rightarrow \mathbb{R}.
\end{equation}
In practice $t$ can be learned \cite{nagel2020up,li2021brecq} or based on some heuristic \cite{squant2022}. In PowerQuant, we define $t$ such that for any pair $(x,y)$ of positive real values we get $t(x)\times t(y) = t(x\times y)$. The set of such functions is the set of automorphisms of $(\mathbb{R}_+^*,\times)$. This set corresponds to the set of power functions $t:x\mapsto x^a$ for $a\in \mathbb{R}$ (see Appendix \ref{sec:appendix_automorphism_set}).  Consequently, the set $\mathcal{Q}$ of power operators is defined as
\begin{equation}\label{eq:power_operators}
    \mathcal{Q} = \left\{ Q_a : X \mapsto \left\lfloor \frac{\text{sign}(X)\times |X|^a}{s(\text{sign}(X)\times |X|^a)} \right\rceil \Big| a \in \mathbb{R} \right\}.
\end{equation}
In other words, for a given power value $a$, the transformation $t$ in equation \ref{eq:general_operator} is $t:x\mapsto \text{sign}(x)\times |x|^a$ where the $\text{sign}(\cdot)$ function generalizes the automorphisms of $\mathbb{R}_+^*$ to $\mathbb{R}$. In prior work, we proposed to search, in a data-free manner, for the quantization operator in $\mathcal{Q}$ that minimizes the quantization reconstruction error. Formally, this boiled down to solving the following minimization problem:
\begin{equation}\label{eq:minimization_problem}
    \min_{a\in\mathbb{R}} \left\{ \sum_{l=1}^L \left\| W_l - Q^{-1}\left(Q\left(W_l\right)\right) \right\|_2 \right\}.
\end{equation}
This problem is locally convex around its unique global solution (we recall the demonstration in Appendix \ref{sec:appendix_maths}). Based on these properties, Equation \ref{eq:minimization_problem} can be solved using the Nelder-Mead method \cite{nelder1965simplex}. Empirically, we observed that this problem was better solved using a single parameter shared for all layers and based only on the weight values. Furthermore, $a=0.5$ enabled near optimal performance in terms of accuracy while requiring the implementation of simpler power functions (square and square roots). Thus leading to a more straightforward integration at inference.

\begin{figure*}[!t]
    \centering
    \includegraphics[width = 0.24\linewidth]{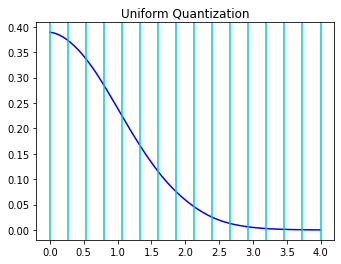}
    \includegraphics[width = 0.24\linewidth]{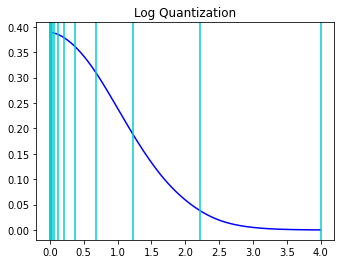}
    \includegraphics[width = 0.24\linewidth]{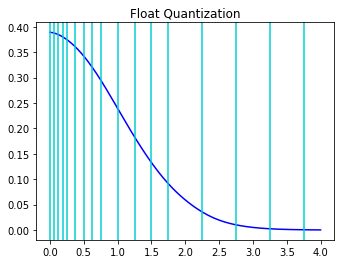}
    \includegraphics[width = 0.24\linewidth]{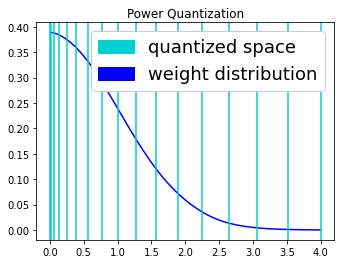}
    \caption{Illustration of the quantization in 4 bits of the positive part of a Gaussian distribution in uniform, logarithm, floating point and power quantization (from left to right). This highlights the balance between precision for very low bit values and larger values that is achieved with PowerQuant \textit{versus} other quantization formats.}
    \label{fig:power_comparison}
\end{figure*}

Intuitively, a value of $a=1$ corresponds to the naive uniform quantization as defined in equation \ref{eq:baseline_operator}. While using a value of $a<1$ puts more emphasis (and precision) to scalar values with a small magnitude. Such values of $a$ result in a strong similarity with floating representations and log-based quantization (see Fig \ref{fig:power_comparison}). In short, such representations are known for their strong ability to better fit to empirical distributions of weights and activations. For instance, the new floating format introduced in QLoRa \cite{dettmers2023qlora} converges to a more rigid version of PowerQuant. While QLoRa is optimal for Gaussian distributions, weight and activations do not strictly follow such prior. PowerQuant offers an empirically more accurate solution which we attribute to its greater flexibility (optimization of $a$) and smoother step size changes.

Nonetheless, PowerQuant has several limitations. First, weight values in power operators cannot be straightforwardly optimized in a GPTQ fashion. Intuitively, GPTQ methods only learn to round up or down which corresponds to the same constraint for all values in the quantized space. However, in non-uniform quantization, all values in the quantized space have significantly different impact in the full-precision space. This discrepancy leads to incoherent optimization and a degradation of the performance. To solve this issue, we propose a novel approach to GPTQ methods for non-uniform quantization. Second, the optimization of $a$ is shared across the network (one value for the model and not one value per-layer) and is limited to the weight values and does not account for the activations. Furthermore, the optimization of $a$, in its current form, has to be performed post-training. In the following sections, we describe how the proposed NUPES method addresses these two limitations.

\subsection{Gradient Based Optimization of the Weight}\label{sec:methodology_kernel}

GPTQ was introduced by Nagel \textit{et al.} \cite{nagel2020up} as a layer-wise self-distillation framework (rather than a full model optimization at once). For a given layer $l$, we assume that every upstream layer has already been optimized. We note $X_{\text{fp}}$ and $X_{\text{q}}$ the intermediate input features of layer $l$ from the full precision and quantized models respectively. Then the goal is to learn the rounding operation by minimizing 
\begin{equation}\label{eq:gptq}
    \min_{\epsilon} \left\| f_l(X_{\text{fp}},W_l) - f_l\left(X_q, \left\lfloor\frac{W_l}{ s(W_l) }\right\rfloor + \sigma(\epsilon) \right) \right\|_2^2.
\end{equation}
Intuitively, as $W_l$ is rounded down $\lfloor\cdot\rfloor$, we learn through stochastic gradient descent a real valued parameter $\epsilon$ initialized such that $ s(W_l) \left(\left\lfloor\frac{W_l}{ s(W_l) }\right\rfloor + \sigma(\epsilon)\right) = W_l$. At the end of the optimization process, $\epsilon$ defines whether we round $W_l$ up or down. In order to bound the influence of $\epsilon$ to [0;1], GPTQ methods apply a variant of sigmoid, called the rectified sigmoid:
\begin{equation}\label{eq:sigmoid}
    \sigma : \epsilon \mapsto \text{clip}\left(\frac{1}{1+e^{-\epsilon}} \times 1.2 - 0.1, 0, 1\right).
\end{equation}

During the optimisation process over the calibration set (usually consisting in $\approx 1k$ examples), a regularization term is added, which forces $\sigma(\epsilon)$ to converge to either zero or one. Consequently, the minimization loss $\mathcal{L}$ is
\begin{equation}\label{eq:gptq_loss}
\begin{aligned}
    \mathcal{L} &= \left\| f_l(X_{\text{fp}},W_l) - f_l\left(X_q, \left\lfloor\frac{W_l}{ s(W_l) }\right\rfloor + \sigma(\epsilon) \right) \right\|_2^2\\
    &+ \lambda \times \left( 1 - \left( 2 \times \sigma(\epsilon) - 1\right)^\beta\right)
\end{aligned}
\end{equation}
where $\lambda$ and $\beta$ are two hyper-parameters with their respective schedulers: the lambda parameter defines whether the rounding term of the loss is used or not, in practice, it is set to zero for the first $20\%$ of the optimization of each layer and then set to $0.01$. The parameter $\beta$ defines the steepness of the rounding loss (the larger $\beta$ the steeper the rounding). The schedule of $\beta$ has a great influence on the performance and will be discussed in the experiments. As a result, one needs to store in memory both $\epsilon$ and $\left\lfloor\frac{W_l}{ s(W_l) }\right\rfloor$ and compute the two loss terms. 
This slows down the optimization process and hinders its scalability especially on larger models. Furthermore, as illustrated in Figure \ref{fig:power_comparison}, PowerQuant quantized space has a very small step size for values near $0$ which implies that optimizing $\epsilon$ can only lead to marginal corrections. On the other hand, for larger values, a change in the rounding operation may lead to tremendous changes and thus introduce instability in the optimization process. In order to circumvent all of these limitations at once, we propose to leverage the differentiable soft quantization activation \cite{gong2019differentiable}
\begin{equation}\label{eq:dsq}
    \text{dsq}(\epsilon) = \frac{\text{tanh}\left(\beta \times \left(\epsilon - \frac{1}{2} - \lfloor\epsilon\rfloor\right)\right)}{2\text{tanh}\left(\frac{\beta}{2}\right)}  + \lfloor\epsilon\rfloor + \frac{1}{2}
\end{equation}
where $\beta$ is the hyper-parameter replacing $\beta$ in equation \ref{eq:gptq_loss} and eliminating the need for $\lambda$. Fig \ref{fig:dsq_graph} illustrates the influence of steepness parameter $\beta$ on the dsq function.
Consequently, our new, simplified objective and loss function become
\begin{equation}\label{eq:dsq_}
    \begin{cases}
        \min_{\epsilon} \left\| f_l(X_{\text{fp}},W_l) - f_l\left(X_q, \text{dsq}(\epsilon) \right) \right\|_2^2\\
        \mathcal{L} = \left\| f_l(X_{\text{fp}},W_l) - f_l\left(X_q, \text{dsq}(\epsilon) \right) \right\|_2^2\\
    \end{cases}
\end{equation}
and is optimized from the initial value $\epsilon = \frac{W_l}{ s(W_l)}$ which learns the new quantized values of $W_l$ as a single tensor, thus enabling values to be shifted by more than $1$ (we no longer only learn to round up or down). Consequently, we can learn larger modifications of values near $0$ and compensate for modifications of larger values. Furthermore, as we only use $\epsilon$ and no longer need the floored weight values, NUPES only requires half the memory footprint (for the model) and only computes one loss term. The memory footprint reduction is particularly important for transformer architectures, for which weight representations are typically more memory-consuming than activations.
Thus, NUPES addresses the first limitation of PowerQuant: learning the weight values in a GPTQ fashion. Furthermore, it also allows to learn the power exponent parameter $a$ alongside the weight values.

\begin{figure}[!t]
    \centering
    \includegraphics[width = 0.75\linewidth]{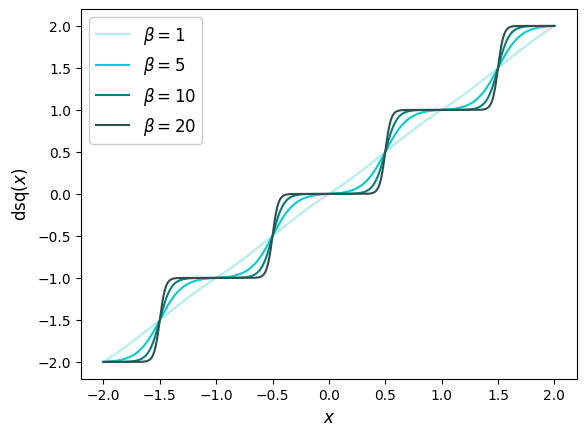}
    \caption{Illustration of the graph of the dsq transformation for different values of the steepness hyperparameter $\beta$.}
    \label{fig:dsq_graph}
\end{figure}

\subsection{Gradient Based Optimization of the Power Exponent}\label{sec:methodology_exponent}
Learning the power exponent $a$ through stochastic gradient descent requires the computation of the derivative 
\begin{equation}\label{eq:derivative_a}
    \frac{\partial X^a}{\partial a} = X^a log(X).
\end{equation}
This, however, raises a number of challenges. First, by definition of the power quantization operator (eq \ref{eq:power_operators}), this derivative appears in both the numerator and the input of the scaling function $s$. However, $s$ shall not be optimized through stochastic gradient descent as it can be analytically derived from the ranges of the tensor $X$. Consequently, we only compute the derivative for the numerator and derive the new scaling value (\textbf{update scale}) with the updated value of $a$, thus addressing this first issue. Second, the current formulation of the gradient is not numerically stable in practice, as near zero values in $X$ lead to infinite gradients due to the log function. This can be solved by clipping the values of $X$ (\textbf{num. stability}) while computing gradients. Third, for a given layer, we compute the update of the parameter $a$ from two transformations: the quantization of the weights (which depends on the batch size) and the quantization of the inputs. Consequently, a simple addition of these two update terms would lead to an unbalance and a strong dependency on the batch size. We alleviate this issue by averaging the gradients contributions (\textbf{balanced grads}), from the weight and input quantization, independently before their combination.

Algorithm \ref{alg:learn_a} summarizes these adaptations of the optimization process that are necessary to learn $a$ during any training process from GPTQ \cite{nagel2020up} to QAT \cite{zhou2017incremental}.  As a result, NUPES solves all the aforementioned shortcomings of GPTQ methods applied to PowerQuant, a claim that we experimentally validate further below.

\begin{algorithm}[!t]
\caption{Learn the exponent parameter $a$}\label{alg:learn_a}
\begin{algorithmic}
\Require a layer $f_l$ with weights $W_l\in\mathbb{R}^{N\times M}$ and inputs $X\in\mathbb{R}^{B\times N}$
\State $\text{scale}_x \leftarrow s(sign(X) \times |X|^a)$ \Comment{update scale (no $\nabla$)}
\State $\text{scale}_w \leftarrow s(sign(W_l) \times |W_l|^a)$ \Comment{update scale (no $\nabla$)}
\State ~
\State \textbf{Forward Pass}
\State $X \leftarrow \left\lfloor \frac{\text{sign}(X)\times |X|^a}{\text{scale}_x} \right\rceil$
\State $W_l \leftarrow \left\lfloor \frac{\text{sign}(W_l)\times |W_l|^a}{\text{scale}_w} \right\rceil$
\State $Y \leftarrow f_l(X,W_l)$
\State ~
\State \textbf{Backward Pass}
\State $X_{\text{clipped}} \leftarrow \text{clip}(|X|, 10^{-6}, \infty)$ \Comment{num stability}
\State $W_{\text{clipped}} \leftarrow \text{clip}(|W|, 10^{-6}, \infty)$ \Comment{num stability}
\State $\nabla_{\text{from inputs}} \leftarrow \frac{X_{\text{clipped}}^a log(X_{\text{clipped}})}{B\times N}$ 
\State $\nabla_{\text{from weights}} \leftarrow \frac{W_{\text{clipped}}^a log(W_{\text{clipped}})}{N\times M}$
\State $\nabla_a \leftarrow \nabla_{\text{from inputs}} + \nabla_{\text{from weights}}$ \Comment{balanced grads}
\end{algorithmic}
\end{algorithm}

\section{Experiments}

In this section, we evaluate NUPES on three main aspects:
\begin{itemize}
    \item the necessity to address the aforementioned shortcomings related to the optimization of the power exponent and quantized weight values,
    \item the accuracy improvement from NUPES over the data-free PowerQuant and other GPTQ methods such as AdaRound \cite{nagel2020up} or BrecQ \cite{li2021brecq},
    \item the interest of power quantization methods to tackle LLMs weight quantization without the use of unpractical mix of floating point and integer representation (contrary to e.g. QLoRa \cite{dettmers2023qlora} or OPTQ \cite{frantar2023optq}).
\end{itemize}

\begin{table}[!t]
    \centering
    \caption{Evaluation, in W4/A8, of PowerQuant with AdaRound in a naïve combination.}
    \label{tab:pq+adaround}
    \begin{tabular}{c|c|c|c}
    \hline
         & ResNet 50 & RetinaNet & ViT \\
    \hline
        PowerQuant & 74.892 & 21.562 & 80.354 \\
        AdaRound & 75.322 & 27.258 & 79.590 \\
        PowerQ + AdaR & 72.384 & 20.346 & 77.214 \\
    \hline
    \end{tabular}
\end{table}
\begin{figure*}
  \begin{minipage}[!t]{.30\linewidth}
    \centering
    \vspace{0.31in}
    \caption
      {
        Graph plots of the $\beta$ schedulers with respect to the number of optimization steps.
        \label{fig:scheduler}
      }
    \vspace{0.07in}
    \includegraphics[width = \linewidth]{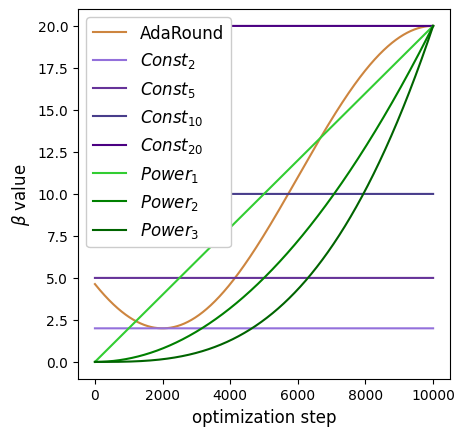}
    
  \end{minipage}
  \begin{minipage}[!t]{.69\linewidth}
    \centering
    \setlength\extrarowheight{2pt}
    \setlength\tabcolsep{0.5pt}
    \captionof{table}
      {%
        Evaluation of the different $\beta$ schedulers for W4/A4 quantization. We also provide the performance of the PowerQuant method as a reference (ref)
        \label{tab:scheduler}
      }
    \begin{tabular}{c|c|c|c|c|c|c|c|c|c}
     \hline
     model & AdaRound & $\text{Const}_{2}$ & $\text{Const}_{5}$ & $\text{Const}_{10}$ & $\text{Const}_{20}$ & $\text{Power}_{1}$ & $\text{Power}_{2}$ & $\text{Power}_{3}$ & ref\\
     \hline
     ResNet 18 & 17.366 & 19.840 & 41.534 & 54.500 & \textbf{64.528} & 14.600 & 24.640 & 29.134 & 56.386 \\
     ResNet 34 & 26.934 & 38.940 & 57.022 & 59.402 & \textbf{68.236} & 31.708 & 31.648 & 31.150 & 62.904 \\
     ResNet 50 & 6.254 & 14.578 & 28.496 & 56.320 & \textbf{68.758} & 8.276 & 14.034 & 8.408 & 62.142 \\
     ResNet 101 & 4.842 & 9.980 & 42.656 & 53.328 & \textbf{71.736} & 22.622 & 25.346 & 23.054 & 64.562 \\
     RetinaNet & 23.118 & 21.472 & 20.692 & 23.652 & \textbf{32.692} & 19.486 & 18.814 & 19.392 & 3.618 \\
     ViT b16 & 55.454 & 58.258 & 59.004 & 59.004 & \textbf{79.578} & 59.026 & 53.180 & 57.788 & 74.134 \\
     ViT l16 & 11.144 & 23.624 & 33.318 & 32.750 & \textbf{34.544} & 14.072 & 23.070 & 6.842 & 33.310 \\
     ViT h14 & 53.219 & 32.362 & 70.878 & 84.796 & \textbf{87.190} & 36.490 & 36.436 & 36.364 & 85.906 \\
     \hline
    \end{tabular}
  \end{minipage}
\end{figure*}

\subsection{Implementation and Datasets}
We conducted our evaluations on a wide range of ConvNets and Transformer networks. Regarding ConvNets, we considered the ResNet \cite{he2016deep} family, MobileNet v2 \cite{sandler2018mobilenetv2}, EfficientNet B0 \cite{tan2019efficientnet} for image classification on ImageNet \cite{imagenet_cvpr09} and RetinaNet \cite{lin2017focal} with a ResNet backbone for dense predictions (object detection) on COCO \cite{cocodataset}. Regarding Transformer architectures, we considered the ViT \cite{dosovitskiy2020image} family for image classification on ImageNet and OPT models \cite{zhang2022opt} for LLM evaluation on a wide set of common sense reasoning tasks \cite{clark2019boolq,bisk2020piqa,zellers2019hellaswag,sakaguchi2021winogrande,clark2018think,mihaylov2018can}. We also included results on the more recent Dolly v2 network \cite{dollyv2}.

In our experiments, the baseline performance of these models come from TensorFlow \cite{tensorflow2015-whitepaper} implementations for image classification, and from torchvision \cite{torchvision} for object detection. The LLMs implementation and weight values were downloaded from HuggingFace \cite{HuggingFace}. When reporting accuracy, we use the commonly adopted notation W4/A8 for a 4 bits quantization of the weights and 8 bits for the activations. In all our experiments, we follow the standard format \cite{nagel2019data,nagel2020up} and use W8/A8 representations for the first and last layers of convolutional neural networks.

NUPES as well as other quantization methods were implemented in both Torch and TensorFlow for experiments and run on a single A100 Nvidia GPU. We use our own implementations for SQuant, DFQ and PowerQuant while we report performance of other methods based upon SQuant paper \cite{squant2022}. In all our experiments with gradient descent optimization, we share the same set of hyper-parameter values. We use the Adam optimizer \cite{kingma2014adam} with base parameters, a batch-size of $32$, for $10k$ steps over $1024$ data samples from the training set and no learning scheduler. The scheduler choice for $\beta$ from equation \ref{eq:dsq}, is discussed in section \ref{sec:scheduler}.

In the next section, we conduct an ablation study to highlight the impact of each component in NUPES.

\subsection{Ablation study}

\subsubsection{GPTQ Loss Scheduler}\label{sec:scheduler}

In Table \ref{tab:pq+adaround}, we report results obtained using a naïve combination of PowerQuant and AdaRound. This motivates the need for the adaptations introduced in Section \ref{sec:methodology_kernel}: indeed, on top of the memory footprint reduction, we show that the integration of PowerQuant within AdaRound is not straightforward. Intuitively, this comes from the non-uniform quantized distribution with creates unbalance between weight values. This is solved by NUPES with allows for more flexibility on the values of $\epsilon$.

\begin{table*}[!t]
\caption{Comparison to state-of-the-art GPTQ techniques. We report the W4/A4 quantized accuracies across convolutional neural networks and transformers. The first set of quantization methods are data-free while the second and third sets leverage a calibration set.}
\label{tab:sota_gptq}
\centering
\setlength\extrarowheight{2pt}
\setlength\tabcolsep{3pt}
    \begin{tabular}{c|c|c|c|c|c|c|c|c}
     \hline
     method & ResNet 18 & ResNet 34 & ResNet 50 & MobileNet v2 & EfficientNet B0 & RetinaNet & ViT b16 & ViT h14 \\
     \hline
     full-precision & 69.674 & 73.230 & 76.150 & 72.074 & 77.618 & 37.294 & 80.978 & 88.434 \\
     \hline
     \hline
     DFQ \cite{nagel2019data} ICCV 2019 & 29.602 & 40.698 & 28.548 & 0.232 & 0.112 & 0.256 & 3.354 & 0.176 \\
     SQuant \cite{squant2022} ICLR 2022 & 48.126 & 49.100 & 52.042 & 0.398 & 0.104 & 0.191 & 3.280 & 0.100 \\
     SPIQ \cite{yvinec2022spiq} WACV 2023 & 50.257 & 52.517 & 52.752 & 0.572 & 3.623 & 0.382 & 4.007 & 0.514 \\
     PowerQuant \cite{yvinec2023powerquant} ICLR 2023 & 56.386 & 62.904 & 62.142 & 0.348 & 3.618 & 2.241 & 74.134 & 85.906 \\
     \hline
     \hline
     AdaRound \cite{nagel2020up} ICML 2020 & 60.258 & 65.174 & 61.656 & 8.840 & 0.102 & 21.392 & 29.906 & 23.070 \\
     BrecQ \cite{li2021brecq} ICLR 2021 & 28.650 & 40.050 & 63.782 & 38.230 & 0.110 & 18.922 & 22.228 & 25.686 \\
     QDrop \cite{wei2022qdrop} ICLR 2022 & 63.448 & 64.176 & 65.766 & 40.984 & 0.100 & 20.812 & - & - \\
     PDQuant \cite{liu2023pd} CVPR 2023 & 63.471 & 64.148 & 66.440 & 41.464 & 0.100 & 21.562 & - & - \\
     \hline
     \hline
     NUPES (learn $a$) & 57.524 & 62.378 & 62.468 & 5.902 & 15.241 & 22.424 & 74.552 & 86.600 \\
     NUPES (learn $W$) & 64.528 & 68.236 & 68.758 & 42.239 & 18.132 & 32.692 & 79.578 & 87.190 \\
     NUPES (learn $W$ \& $a$) & \textbf{65.876} & \textbf{69.954} & \textbf{70.684} & \textbf{42.386} & \textbf{45.902} & \textbf{33.078} & \textbf{80.100} & \textbf{87.204} \\
     \hline
    \end{tabular}
\end{table*}
We control this flexibility through the steepness of the soft rounding function from equation \ref{eq:dsq}, we defined several schedulers for the $\beta$ parameter. Our first candidate was introduced in AdaRound \cite{nagel2020up} to also control the steepness of the soft rounding they proposed which has been successfully leveraged in many subsequent works \cite{li2021brecq,wei2022qdrop,liu2023pd}. For the other candidates, we considered naive schedulers that have also been applied to, general training, learning rate scheduling. Formally, our schedulers are
\begin{equation}
    \begin{cases}
        \text{AdaRound}(s) = 20 + \frac{-18}{2} \left(1 + \cos\left(\frac{s}{S} \pi\right)\right)\\
        \text{Const}_c(s) = c\\
        \text{Power}_c(s) = 20 \left(\frac{s}{S} \right)^{c} \\
    \end{cases}
\end{equation}
In Figure \ref{fig:scheduler}, we illustrate the evolution of each candidate scheduler with respect to the optimization step $s$ out of $S$. The original GPTQ method \cite{nagel2020up} and the subsequent counterparts have been using a sophisticated scheduler for $\beta$. The intuition behind the value of $\beta$ is the cost for any value $v = n + \epsilon$, with $n\in\mathbb{N}$ and $\epsilon \in [0;1]$, to learn a final value $v^* = n-1$ or $v^*=n+2$. The smaller $\beta$, the easier it gets to drift outside of the $[0;1]$ interval.

In Table \ref{tab:scheduler}, we compare these schedulers on several architectures and tasks. We observe that the constant strategy with $c=20$ systemically outperforms the other candidates as well as the PowerQuant baseline (ref column). This highlights that although it is important to allow the weights to shift by more than a single quantization step, this phenomenon should be constrained to a few occurrences. From now on, the NUPES method will always use the constant scheduler with $\beta = 20$. The resulting optimization process requires less memory footprint, as we only store the weights and not the $\epsilon$ tensor. This is crucial, especially for transformer architectures where the weight values represent most of the memory footprint.

As a result, we can optimize one of the decisive factors of the performance of a power quantized model: the quantized weight values. In the next section, we evaluate the ability of NUPES to also optimize the remaining decisive factor: exponent parameter.

\subsubsection{Power Exponent Optimization Backward Pass}
In Table \ref{tab:exponent_opt}, we evaluate the impact of the proposed learning process (sec \ref{sec:methodology_exponent}) for the exponent parameter. We consider the challenging int4 quantization configuration (W4/A4) and initialize the power values at $0.5$. First, we can observe that, in the absence of numerical stability safety nets, the learning processes never completes (hence the "-" accuracy). More precisely, we observe that the learning crashes after 4 optimization steps on average due to the computation of the logarithm of some zero values. Second, regardless on the architecture (transformer/convnet) and task (classification/detection), balancing the contributions of the inputs and weights as well as separating the update of the scales from the stochastic gradient descent process systematically improve the final accuracy. Furthermore, our results on RetinaNet show that the most important step is the isolation of the scales update. This can be explained by two facts: first, the choice of the scaling factor can be analytically computed. Second, the scaling factor is set to reduce the error introduced by the rounding and clipping processes, however theses two steps are not considered during gradient descent. In conclusion, the learning process of the exponent parameter is non-trivial and can lead to significant improvements to the accuracy of the quantized model, provided it is done correctly. For example, on RetinaNet, the PowerQuant method only achieves a 2.241 MAP, while NUPES improves this score by 1006\% (see Table \ref{tab:sota_gptq}).

\begin{table}[!t]
\caption{Evaluation of the impact of learning the exponent (and not the weights) in W4/A4 quantization. We provide the full-precision original performance (accuracy or MAP) of the pre-trained model.}
\label{tab:exponent_opt}
\centering
\setlength\extrarowheight{2pt}
\setlength\tabcolsep{3pt}
    \begin{tabular}{c|c|c|c|c}
     \hline
     model & num. stability & balanced grads & update scale & Accuracy \\
     \hline
     \multirow{5}{*}{\shortstack{ResNet50\\(76.15)}} & \xmark & \xmark & \xmark & - \\
      & \cmark & \xmark & \xmark & 46,552 \\
      & \cmark & \cmark & \xmark & 58.950 \\
      & \cmark & \xmark & \cmark & 61.997 \\
      & \cmark & \cmark & \cmark & \textbf{62.468} \\
     \hline
     \multirow{5}{*}{\shortstack{RetinaNet\\(37.294)}} & \xmark & \xmark & \xmark & - \\
      & \cmark & \xmark & \xmark & 4.084 \\
      & \cmark & \cmark & \xmark & 4.128 \\
      & \cmark & \xmark & \cmark & 19.838 \\
      & \cmark & \cmark & \cmark & 22.424 \\
     \hline
     \multirow{5}{*}{\shortstack{ViT b16\\(78.05)}} & \xmark & \xmark & \xmark & - \\
      & \cmark & \xmark & \xmark & 39.282 \\
      & \cmark & \cmark & \xmark & 62.142 \\
      & \cmark & \xmark & \cmark & 73,794 \\
      & \cmark & \cmark & \cmark & \textbf{74.552} \\
     \hline
    \end{tabular}
\end{table}

In Figure \ref{fig:power_exponent}, we display the learned power exponent values with respect to the layer's depth. Although the result seems chaotic, we can still draw several conclusions. First, we observe that the first layer tends to require higher exponent values, this suggests that they are characterized by less peaky distributions. This behavior, can also be attributed to the larger bit-width (8 bits) for ResNet and RetinaNet.
Second, we observe that all layers fall in the range $a_l\in[0.213; 0.774]$ which is in adequacy with the great performance of the value $0.5$ that we leveraged in PowerQuant \cite{yvinec2023powerquant}. This result is consistent regardless of the initial value of $a$. Third, the most important conclusion, we solved the problem we faced in our initial method: we can learn distinct power exponent values per layer and improve the final accuracy.

\begin{figure}[!t]
    \centering
    \includegraphics[width = \linewidth]{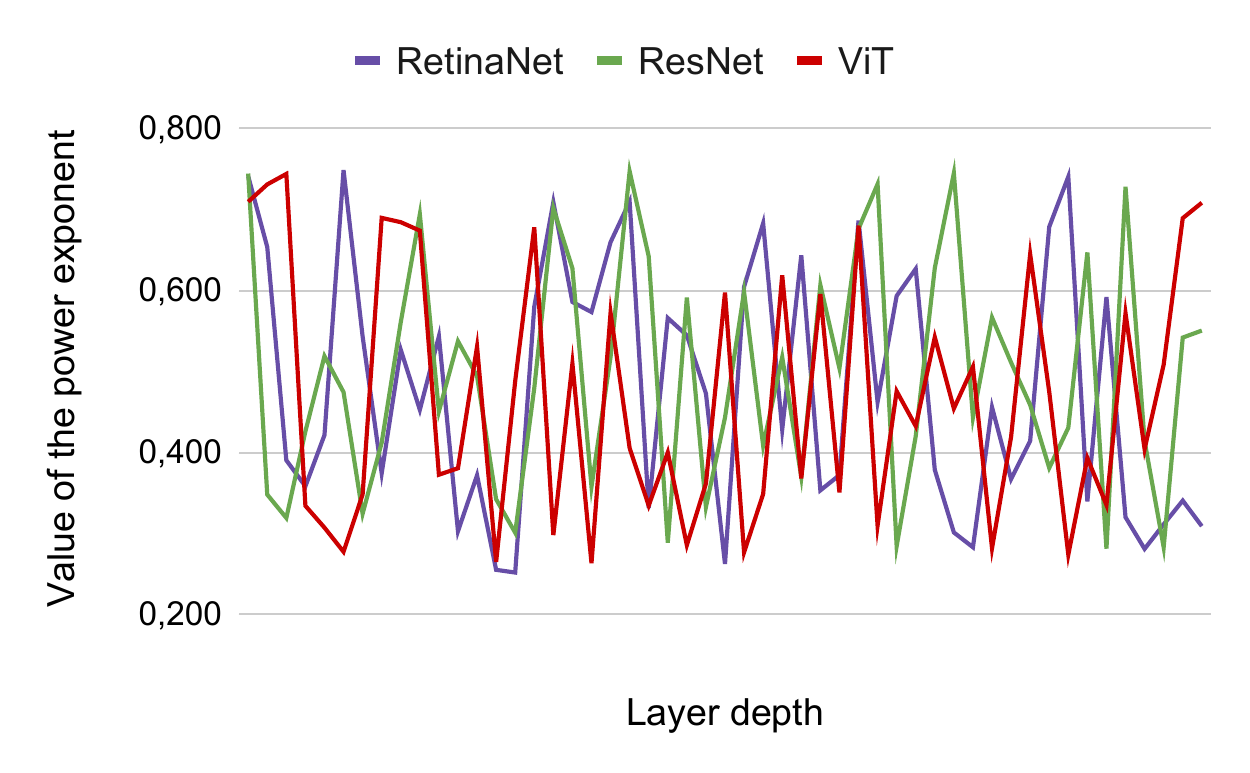}
    \caption{Plot of the learned power exponent values for several architectures in W4/A4.}
    \label{fig:power_exponent}
\end{figure}

In summary, we demonstrated the strong added value from each components of NUPES over the already strong PowerQuant baseline. In the following section, we show how PowerQuant and NUPES compare to other data-free and PTQ methods respectively.

\subsection{Main result: Comparison to other GPTQ methods}

\begin{table*}[!t]
\caption{Evaluation on data-free quantization methods on large language models W4/A16 quantization for common sense reasoning tasks.}
\label{tab:llms}
\centering
\setlength\extrarowheight{2pt}
\setlength\tabcolsep{3pt}
    \begin{tabular}{c|c|c|c|c|c|c|c|c|c}
     \hline
     model & method & OpenBookQA & ARC-E & ARC-C & WinoGrande & HellaSwag & PIQA & BoolQ & Average \\
     \hline
     \hline
     \multirow{4}{*}{Dolly v2 3B} & full-precision & 27.600 & 61.742 & 34.044 & 59.274 & 49.861 & 73.885 & 58.315 & 52.103 \\
     & DFQ \cite{nagel2019data} ICCV 2019 & 26.000 & 55.808 & 27.730 & 57.616 & 43.567 & 71.273 & 53.456 & 47.921 (-4.182) \\
     & SQuant \cite{squant2022} ICLR 2022 & 26.200 & 55.934 & 28.328 & 57.301 & 43.587 & 71.491 & 53.700 & 48.077 (-4.026) \\
     & PowerQuant \cite{yvinec2023powerquant} ICLR 2023 & \textbf{27.200} & \textbf{61.880} & \textbf{33.253} & \textbf{58.950} & \textbf{48.560} & \textbf{73.905} & \textbf{57.609} & \textbf{51.622} (-0.481) \\
     \hline
     \hline
     \multirow{4}{*}{Dolly v2 7B} & full-precision & 30.600 & 64.141 & 37.713 & 61.010 & 52.778 & 74.755 & 64.862 & 55.123 \\
     & DFQ \cite{nagel2019data} ICCV 2019 & 23.600 & 53.956 & 33.020 & 54.775 & 44.633 & 69.260 & 64.801 & 49.149 (-5.973) \\
     & SQuant \cite{squant2022} ICLR 2022 & 24.200 & 54.167 & 33.106 & 54.854 & 44.573 & 69.260 & 64.801 & 49.280 (-5.842) \\
     & PowerQuant \cite{yvinec2023powerquant} ICLR 2023 & \textbf{30.400} & \textbf{62.386} & \textbf{35.214} & \textbf{60.537} & \textbf{52.542} & \textbf{74.776} & \textbf{65.034} & \textbf{54.413} (-0.710) \\
     \hline
     \hline
     \multirow{4}{*}{OPT 13B} & full-precision & 27.000 & 61.953 & 33.020 & 65.746 & 52.390 & 76.714 & 64.954 & 54.540 \\
     & DFQ \cite{nagel2019data} ICCV 2019 & 25.400 & 59.975 & 29.836 & 63.062 & 49.044 & 75.734 & 49.786 & 50.405 (-4.135) \\
     & SQuant \cite{squant2022} ICLR 2022 & 25.547 & 60.145 & 29.911 & 63.097 & 49.032 & 75.734 & 49.786 & 50.465 (-4.075)\\
     & PowerQuant \cite{yvinec2023powerquant} ICLR 2023 & \textbf{27.000} & \textbf{61.816} & \textbf{32.741} & \textbf{64.088} & \textbf{51.115} & \textbf{76.354} & \textbf{67.217} & \textbf{54.333} (-0.207) \\
     \hline
     \hline
     \multirow{4}{*}{OPT 30B} & full-precision & 30.600 & 64.941 & 34.471 & 68.272 & 54.272 & 77.911 & 70.061 & 57.218 \\
     & DFQ \cite{nagel2019data} ICCV 2019 & 27.400 & 56.860 & 30.119 & 63.931 & 50.119 & 75.680 & 67.339 & 53.064 (-4.154) \\
     & SQuant \cite{squant2022} ICLR 2022 & 27.400 & 57.077 & 30.231 & 63.967 & 50.140 & 75.676 & 67.339 & 53.119 (-4.099) \\
     & PowerQuant \cite{yvinec2023powerquant} ICLR 2023 & \textbf{30.500} & \textbf{63.552} & \textbf{34.276} & \textbf{67.930} & \textbf{53.625} & \textbf{78.183} & \textbf{69.966} & \textbf{56.862} (-0.356) \\
     \hline
    \end{tabular}
\end{table*}
\begin{figure}[!t]
    \centering
    \includegraphics[width = \linewidth]{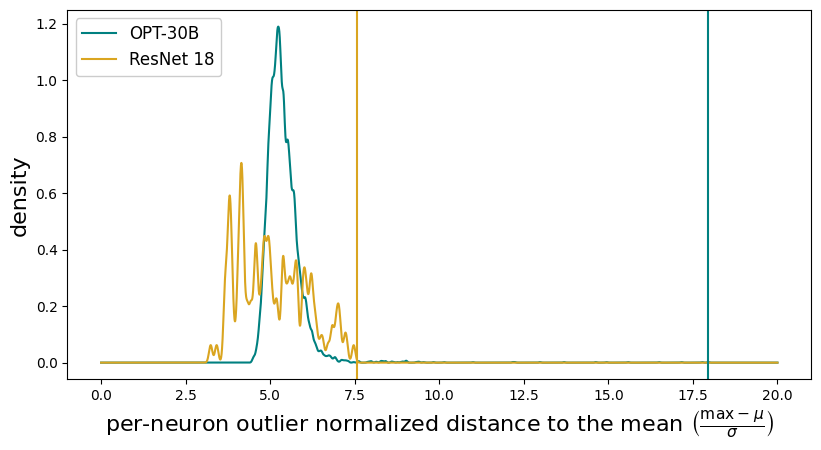}
    \caption{Distribution of outlier weight values defined by their distance to the mean value in terms of standard deviations for a ResNet 18 and OPT model. This highlights the specific challenge introduced by LLMs in terms of quantization ranges.}
    \label{fig:outliers}
\end{figure}

Among state-of-the-art data-free quantization techniques, PowerQuant stands out as the most effective technique. In this study, we removed the comparison to methods that require data generation such as GDFQ \cite{xu2020generative}, because such methods fail to scale to both very large models and are not suited for models that do not use batch normalization layers \cite{hatamizadeh2022gradvit}. For these reasons, we focused on fully data-free techniques that are effective at all model sizes.

In Table \ref{tab:sota_gptq}, we report our extensive study of post-training W4/A4 quantization techniques on convolutional neural networks (ResNets, MobileNets and EfficientNets) as well as transformers from ViT b16 (86M parameters) to ViT h14 (600M parameters). In this extreme compression regime, we observe the limits of previous state-of-the art methods SQuant \cite{squant2022} and SPIQ \cite{yvinec2022spiq}. This is not the case for PowerQuant which already achieves strong results on ResNets and transformers and, as such, offers a very strong baseline for the proposed NUPES method.

The proposed evaluation is particularly challenging as compared to results shared in SQuant \cite{squant2022} or PowerQuant \cite{yvinec2023powerquant} as we use symmetric quantization with no zero points. Still, PowerQuant achieves results on par with post-training methods such as AdaRound on ResNet architectures and even outperforms AdaRound on vision transformer quantization. Interestingly, our empirical study shows that extensions of AdaRound, such as BrecQ \cite{li2021brecq}, QDrop \cite{wei2022qdrop} and PD-Quant \cite{liu2023pd} offer strong improvements on MobileNet architectures but are less effective on other architectures. This can be attributed to two factors: first, we applied the same training procedure as in AdaRound \cite{nagel2020up} with 10K optimization steps (contrary to the 20K used in QDrop and PD-Quant). Second, their initialization corresponds to the DFQ baseline, which suffers from huge accuracy drops in W4/A4 and thus limits the ability of such methods to match NUPES, especially on vision transformers or at low bit-width formats.

The proposed NUPES method significantly and consistently outperforms state-of-the-art methods on all the benchmarked models and tasks in W4/A4 quantization. In particular, we observe that the largest contribution comes from the optimization of the weight values as the second last row is also the second best performance on each architecture. We also emphasize that the improvements over AdaRound introduced in other post-training techniques \cite{li2021brecq,wei2022qdrop,liu2023pd} could also be adapted to NUPES for further accuracy improvements.

As our results suggest, NUPES shows its highest scores on the transformer architecture. This can be, in part, attributed to the performance of the baseline PowerQuant method which is well suited for the weight and activation distributions learned by such architectures. In the following section, we study the empirical ability of PowerQuant to quantize the largest transformers \textit{i.e.} large language models.
In our evaluation, we propose two discussions: first, PowerQuant without in its current form, second, PowerQuant in combination with group-wise quantization.

\subsection{Quantization at all Sizes: handling outliers}
In Table \ref{tab:llms}, we report our results for several large language models on common sense reasoning tasks. We do not use group-wise quantization as it leads to incompatibility with activation quantization due to the constraint of dimensionality as explained in SPIQ \cite{yvinec2022spiq}. In other words, while we can demonstrate that group-wise quantization can lead to higher compression rate for the weights, such methods are bound to never quantize the activations. Consequently, we first highlight the fact that, to the best of our knowledge, PowerQuant bares the most promising results for the future of LLM inference. In fact, our results suggest that it preserves the performance of the full-precision model with up to 1 point drop using 4 bits quantization without any specific techniques to address the presence of outliers. This can be explained by the ability to stretch out the quantization step of larger values as illustrated in Figure \ref{fig:power_comparison} (sec \ref{sec:powerquant_methodo}).

\begin{table}[!t]
    \centering
    \setlength\extrarowheight{2pt}
    \setlength\tabcolsep{3pt}
    \caption{Evaluation of the proposed method with group-wise quantization on LLMs in W3/A16 and grouping of size 128. We report the average score on common sense reasoning tasks like in Table \ref{tab:llms}.}
    \label{tab:group_wise_quantization}
    \begin{tabular}{c|c|c|c|c}
    \hline 
        LLM & DFQ & OPTQ & PowerQ & PowerQ + Group-Wise \\
    \hline 
        OPT 13B & 35.210 & 52.643 & 46.448 & \textbf{54.472} \\
        OPT 30B & 32.184 & 51.132 & 47.758 & \textbf{56.377} \\
    \hline 
    \end{tabular}
\end{table}

In Table \ref{tab:group_wise_quantization}, we report the the influence of group-wise quantization \cite{park2022nuqmm} on post-training quantization for LLMs (and add DFQ as a reference). While some methods, like OPTQ \cite{frantar2023optq}, leverage re-ordering and grouping with success in terms of accuracy, we foresee that this will lead to future bottlenecks. Nonetheless, these methods can be combined with PowerQuant in a straightforward manner and immediately offer significant accuracy improvements for 3 bits quantization. We observe that PowerQuant with group-wise quantization unlocks near full-precsion performance (less than one percent drop) using only 3 bits. As a result, an OPT model with 13 billion parameters can be loaded on a device with only 5GB of memory rather than 26GB initially. 

\section{Conclusion}
In this work, we proposed a novel, non-uniform, post-training quantization technique. This method is based on the search for the best quantization operator among transformations that map multiplications to multiplications that we introduced in PowerQuant, a data-free quantization method. Because such transformations are power functions, the search for the best operator consists in the optimization of the power exponent. However, PowerQuant suffered from two limitations that we addressed in NUPES. First, we observed that PowerQuant could not be naively combined with gradient-based post-training quantization schemes such as AdaRound which are known for their strong results. NUPES alleviates this limitation by introducing differentiable soft quantization. Second, the optimization of the power exponent was initially derived from the weight values only and was not compatible with stochastic gradient descent. In NUPES, we solved this problem, enabling per-layer adaptation of PowerQuant and thus unlocking even higher accuracies. On top of significantly improving the performance of PowerQuant, NUPES also diminishes the memory footprint and computational requirements of GPTQ methods by using a single weight tensor and a single loss term instead of two. As a result, our empirical benchmark on ConvNets and transformers highlight that NUPES vastly outperforms other post-training compression techniques. In particular, on transformers, PowerQuant alone already shows remarkable result, in particular, at handling outliers. Consequently, we evaluated it on large language models. Our empirical results show that this non-uniform quantization scheme unlocks 4 bits and 3 bits weight encoding without damaging the accuracy nor requiring re-ordering and grouping which are incompatible with activation quantization.

\begin{IEEEbiography}[{\includegraphics[width=1in,height=1.25in,clip,keepaspectratio]{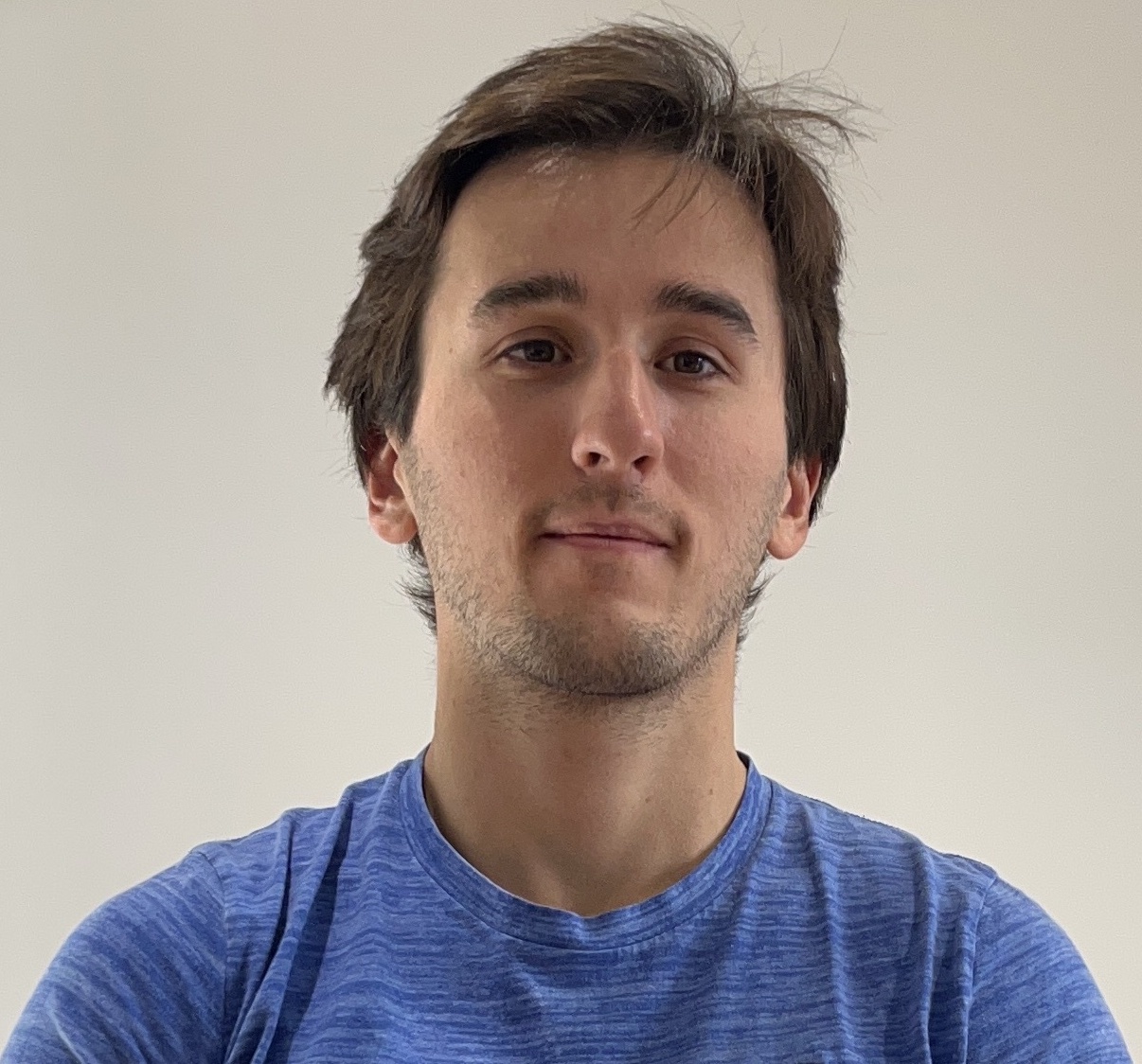}}]{Edouard YVINEC}
received his master's degree from Ecole Normale Superieure Paris-Saclay in 2020 and is currently a P.h.D. student at ISIR in Sorbonne Université. His research interests include but are not limited to DNN solutions for computer vision tasks, compression and acceleration of such models.
\end{IEEEbiography}
\vspace{-0.5in}
\begin{IEEEbiography}[{\includegraphics[width=1in,height=1.25in,clip,keepaspectratio]{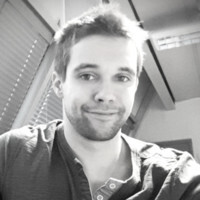}}]{Arnaud DAPOGNY} is a computer vision researcher at Datakalab. He graduated from Sup\'elec in 2011, Sorbonne University in 2013, and obtained a PhD at Institute for Intelligent Systems and Robotics (ISIR) in 2016. He worked as a post-doctoral fellow at LIP6. His works concern deep learning for computer vision and its application to automatic facial behavior as well as gesture analysis.
\end{IEEEbiography}
\vspace{-0.5in}
\begin{IEEEbiography}[{\includegraphics[width=1in,height=1.25in,clip,keepaspectratio]{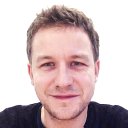}}]{Kevin BAILLY}
is associate professor with the Institute of Intelligent Systems and Robotics (ISIR) at Sorbonne University and Head of Research of Datakalab. He received the PhD degree in computer science from the Pierre et Marie Curie University in 2010 and was a postdoctoral researcher at Telecom Paris from 2010 to 2011. His research interests are in machine learning and computer vision applied to face processing and behavior analysis.
\end{IEEEbiography}

{
\bibliographystyle{ieee_fullname}
\bibliography{main.bib}

\begin{thebibliography}{10}\itemsep=-1pt

\bibitem{tensorflow2015-whitepaper}
Mart\'{i}n Abadi et~al.
\newblock {TensorFlow}: Large-scale machine learning on heterogeneous systems,
  2015.
\newblock Software available from tensorflow.org.

\bibitem{banner2019post}
Ron Banner, Yury Nahshan, and Daniel Soudry.
\newblock Post training 4-bit quantization of convolutional networks for
  rapid-deployment.
\newblock {\em NeurIPS}, pages 7950--7958, 2019.

\bibitem{bisk2020piqa}
Yonatan Bisk, Rowan Zellers, Jianfeng Gao, Yejin Choi, et~al.
\newblock Piqa: Reasoning about physical commonsense in natural language.
\newblock In {\em AAAI}, volume~34, pages 7432--7439, 2020.

\bibitem{chen2015compressing}
Wenlin Chen, James Wilson, et~al.
\newblock Compressing neural networks with the hashing trick.
\newblock {\em ICML}, pages 2285--2294, 2015.

\bibitem{cheng2017survey}
Yu Cheng, Duo Wang, et~al.
\newblock A survey of model compression and acceleration for deep neural
  networks.
\newblock {\em arXiv preprint arXiv:1710.09282}, 2017.

\bibitem{clark2019boolq}
Christopher Clark, Kenton Lee, Ming-Wei Chang, Tom Kwiatkowski, Michael
  Collins, and Kristina Toutanova.
\newblock Boolq: Exploring the surprising difficulty of natural yes/no
  questions.
\newblock {\em arXiv preprint arXiv:1905.10044}, 2019.

\bibitem{clark2018think}
Peter Clark, Isaac Cowhey, Oren Etzioni, Tushar Khot, Ashish Sabharwal, Carissa
  Schoenick, and Oyvind Tafjord.
\newblock Think you have solved question answering? try arc, the ai2 reasoning
  challenge.
\newblock {\em arXiv preprint arXiv:1803.05457}, 2018.

\bibitem{squant2022}
Guo Cong, Qiu Yuxian, Leng Jingwen, Gao Xiaotian, Zhang Chen, Liu Yunxin, Yang
  Fan, Zhu Yuhao, and Guo Minyi.
\newblock Squant: On-the-fly data-free quantization via diagonal hessian
  approximation.
\newblock {\em ICLR}, 2022.

\bibitem{courbariaux2015binaryconnect}
Matthieu Courbariaux, Yoshua Bengio, and Jean-Pierre David.
\newblock Binaryconnect: Training deep neural networks with binary weights
  during propagations.
\newblock {\em NeurIPS}, pages 3123--3131, 2015.

\bibitem{courbariaux2016binarized}
Matthieu Courbariaux, Itay Hubara, et~al.
\newblock Binarized neural networks: Training deep neural networks with weights
  and activations constrained to+ 1 or-1.
\newblock {\em NeurIPS}, 2016.

\bibitem{dollyv2}
Data-Bricks.
\newblock Databricks' dolly-v2, an instruction-following large language model
  trained on the databricks machine learning platform.
\newblock huggingface reference (\url{https://huggingface.co/databricks}),
  2023.

\bibitem{imagenet_cvpr09}
J. Deng, W. Dong, et~al.
\newblock {ImageNet: A Large-Scale Hierarchical Image Database}.
\newblock {\em CVPR}, 2009.

\bibitem{dettmers2022llm}
Tim Dettmers, Mike Lewis, Younes Belkada, and Luke Zettlemoyer.
\newblock Llm. int8 (): 8-bit matrix multiplication for transformers at scale.
\newblock {\em arXiv preprint arXiv:2208.07339}, 2022.

\bibitem{dettmers2023qlora}
Tim Dettmers, Artidoro Pagnoni, Ari Holtzman, and Luke Zettlemoyer.
\newblock Qlora: Efficient finetuning of quantized llms.
\newblock {\em arXiv preprint arXiv:2305.14314}, 2023.

\bibitem{dosovitskiy2020image}
Alexey Dosovitskiy et~al.
\newblock An image is worth 16x16 words: Transformers for image recognition at
  scale.
\newblock {\em ICLR}, 2021.

\bibitem{torchvision}
Daniel Falbel.
\newblock {\em torchvision: Models, Datasets and Transformations for Images},
  2023.
\newblock https://torchvision.mlverse.org,
  https://github.com/mlverse/torchvision.

\bibitem{frantar2023optq}
Elias Frantar, Saleh Ashkboos, Torsten Hoefler, and Dan Alistarh.
\newblock Optq: Accurate quantization for generative pre-trained transformers.
\newblock In {\em ICLR}, 2023.

\bibitem{gong2019differentiable}
Ruihao Gong et~al.
\newblock Differentiable soft quantization: Bridging full-precision and low-bit
  neural networks.
\newblock In {\em ICCV}, pages 4852--4861, 2019.

\bibitem{gong2014compressing}
Yunchao Gong, Liu Liu, Ming Yang, and Lubomir Bourdev.
\newblock Compressing deep convolutional networks using vector quantization.
\newblock {\em arXiv preprint arXiv:1412.6115}, 2014.

\bibitem{hatamizadeh2022gradvit}
Ali Hatamizadeh, Hongxu Yin, Holger~R Roth, Wenqi Li, Jan Kautz, Daguang Xu,
  and Pavlo Molchanov.
\newblock Gradvit: Gradient inversion of vision transformers.
\newblock In {\em Proceedings of the IEEE/CVF Conference on Computer Vision and
  Pattern Recognition}, pages 10021--10030, 2022.

\bibitem{he2016deep}
Kaiming He, Xiangyu Zhang, et~al.
\newblock Deep residual learning for image recognition.
\newblock {\em CVPR}, pages 770--778, 2016.

\bibitem{herrlich2006axiom}
Horst Herrlich.
\newblock {\em Axiom of choice}, volume 1876.
\newblock Springer, 2006.

\bibitem{hubara2016binarized}
Itay Hubara, Matthieu Courbariaux, Daniel Soudry, Ran El-Yaniv, and Yoshua
  Bengio.
\newblock Binarized neural networks.
\newblock {\em NeurIPS}, 29, 2016.

\bibitem{HuggingFace}
HuggingFace.
\newblock The opt model was proposed in open pre-trained transformer language
  models by meta ai.
\newblock \url{https://huggingface.co/docs/transformers/model_doc/opt}.

\bibitem{jeon2020biqgemm}
Yongkweon Jeon, Baeseong Park, Se~Jung Kwon, Byeongwook Kim, Jeongin Yun, and
  Dongsoo Lee.
\newblock Biqgemm: matrix multiplication with lookup table for
  binary-coding-based quantized dnns.
\newblock {\em SC20: International Conference for High Performance Computing,
  Networking, Storage and Analysis}, pages 1--14, 2020.

\bibitem{kingma2014adam}
Diederik~P Kingma and Jimmy Ba.
\newblock Adam: A method for stochastic optimization.
\newblock {\em arXiv preprint arXiv:1412.6980}, 2014.

\bibitem{krishnamoorthi2018quantizing}
Raghuraman Krishnamoorthi.
\newblock Quantizing deep convolutional networks for efficient inference: A
  whitepaper.
\newblock {\em arXiv preprint arXiv:1806.08342}, 2018.

\bibitem{li2021brecq}
Yuhang Li, Ruihao Gong, Xu Tan, Yang Yang, Peng Hu, Qi Zhang, Fengwei Yu, Wei
  Wang, and Shi Gu.
\newblock Brecq: Pushing the limit of post-training quantization by block
  reconstruction.
\newblock {\em ICLR}, 2021.

\bibitem{cocodataset}
Tsung{-}Yi Lin et~al.
\newblock Microsoft {COCO:} common objects in context.
\newblock {\em CoRR}, abs/1405.0312, 2014.

\bibitem{lin2017focal}
Tsung-Yi Lin, Priya Goyal, Ross Girshick, Kaiming He, and Piotr Doll{\'a}r.
\newblock Focal loss for dense object detection.
\newblock In {\em ICCV}, pages 2980--2988, 2017.

\bibitem{liu2023pd}
Jiawei Liu, Lin Niu, Zhihang Yuan, Dawei Yang, Xinggang Wang, and Wenyu Liu.
\newblock Pd-quant: Post-training quantization based on prediction difference
  metric.
\newblock In {\em CVPR}, pages 24427--24437, 2023.

\bibitem{mihaylov2018can}
Todor Mihaylov, Peter Clark, Tushar Khot, and Ashish Sabharwal.
\newblock Can a suit of armor conduct electricity? a new dataset for open book
  question answering.
\newblock {\em arXiv preprint arXiv:1809.02789}, 2018.

\bibitem{miyashita2016convolutional}
Daisuke Miyashita, Edward~H Lee, and Boris Murmann.
\newblock Convolutional neural networks using logarithmic data representation.
\newblock {\em arXiv preprint arXiv:1603.01025}, 2016.

\bibitem{nagel2020up}
Markus Nagel, Rana~Ali Amjad, Mart Van~Baalen, Christos Louizos, and Tijmen
  Blankevoort.
\newblock Up or down? adaptive rounding for post-training quantization.
\newblock In {\em ICML}, pages 7197--7206. PMLR, 2020.

\bibitem{nagel2019data}
Markus Nagel, Mart~van Baalen, et~al.
\newblock Data-free quantization through weight equalization and bias
  correction.
\newblock {\em ICCV}, pages 1325--1334, 2019.

\bibitem{nelder1965simplex}
John~A Nelder and Roger Mead.
\newblock A simplex method for function minimization.
\newblock {\em The computer journal}, 7(4):308--313, 1965.

\bibitem{park2022nuqmm}
Gunho Park, Baeseong Park, Se~Jung Kwon, Byeongwook Kim, Youngjoo Lee, and
  Dongsoo Lee.
\newblock nuqmm: Quantized matmul for efficient inference of large-scale
  generative language models.
\newblock {\em arXiv preprint arXiv:2206.09557}, 2022.

\bibitem{ruder2016overview}
Sebastian Ruder.
\newblock An overview of gradient descent optimization algorithms.
\newblock {\em arXiv preprint arXiv:1609.04747}, 2016.

\bibitem{sakaguchi2021winogrande}
Keisuke Sakaguchi, Ronan~Le Bras, Chandra Bhagavatula, and Yejin Choi.
\newblock Winogrande: An adversarial winograd schema challenge at scale.
\newblock {\em Communications of the ACM}, 64(9):99--106, 2021.

\bibitem{sandler2018mobilenetv2}
Mark Sandler, Andrew Howard, et~al.
\newblock Mobilenetv2: Inverted residuals and linear bottlenecks.
\newblock {\em CVPR}, pages 4510--4520, 2018.

\bibitem{tan2019efficientnet}
Mingxing Tan and Quoc~V Le.
\newblock Efficientnet: Rethinking model scaling for convolutional neural
  networks.
\newblock {\em ICML}, pages 6105--6114, 2019.

\bibitem{vaswani2017attention}
Ashish Vaswani, Noam Shazeer, Niki Parmar, Jakob Uszkoreit, Llion Jones,
  Aidan~N Gomez, Lukasz Kaiser, and Illia Polosukhin.
\newblock Attention is all you need.
\newblock {\em NeurIPS}, 2017.

\bibitem{wei2022qdrop}
Xiuying Wei, Ruihao Gong, Yuhang Li, Xianglong Liu, and Fengwei Yu.
\newblock Qdrop: randomly dropping quantization for extremely low-bit
  post-training quantization.
\newblock {\em ICLR}, 2022.

\bibitem{xu2020generative}
Shoukai Xu, Haokun Li, Bohan Zhuang, Jing Liu, Jiezhang Cao, Chuangrun Liang,
  and Mingkui Tan.
\newblock Generative low-bitwidth data free quantization.
\newblock {\em ECCV}, pages 1--17, 2020.

\bibitem{yvinec2022spiq}
Edouard Yvinec, Arnaud Dapogny, Matthieu Cord, and Kevin Bailly.
\newblock Spiq: Data-free per-channel static input quantization.
\newblock {\em arXiv preprint arXiv:2203.14642}, 2022.

\bibitem{yvinec2023powerquant}
Edouard Yvinec, Arnaud Dapogny, Matthieu Cord, and Kevin Bailly.
\newblock Powerquant: Automorphism search for non-uniform quantization.
\newblock In {\em ICLR}, 2023.

\bibitem{zellers2019hellaswag}
Rowan Zellers et~al.
\newblock Hellaswag: Can a machine really finish your sentence?
\newblock {\em arXiv preprint arXiv:1905.07830}, 2019.

\bibitem{zhang2022opt}
Susan Zhang, Stephen Roller, Naman Goyal, Mikel Artetxe, Moya Chen, Shuohui
  Chen, Christopher Dewan, Mona Diab, Xian Li, Xi~Victoria Lin, et~al.
\newblock Opt: Open pre-trained transformer language models.
\newblock {\em arXiv preprint arXiv:2205.01068}, 2022.

\bibitem{zhang2021training}
Sai~Qian Zhang, Bradley McDanel, HT Kung, and Xin Dong.
\newblock Training for multi-resolution inference using reusable quantization
  terms.
\newblock In {\em ICASP}, pages 845--860, 2021.

\bibitem{zhang2022pokebnn}
Yichi Zhang, Zhiru Zhang, and Lukasz Lew.
\newblock Pokebnn: A binary pursuit of lightweight accuracy.
\newblock In {\em CVPR}, pages 12475--12485, 2022.

\bibitem{zhao2019improving}
Ritchie Zhao, Yuwei Hu, Jordan Dotzel, Chris De~Sa, and Zhiru Zhang.
\newblock Improving neural network quantization without retraining using
  outlier channel splitting.
\newblock {\em ICML}, pages 7543--7552, 2019.

\bibitem{zhou2017incremental}
Aojun Zhou, Anbang Yao, Yiwen Guo, Lin Xu, and Yurong Chen.
\newblock Incremental network quantization: Towards lossless cnns with
  low-precision weights.
\newblock {\em ICLR}, 2017.

\end{thebibliography}
}
\newpage

%
%
%
%
\begin{appendices}

\section{Discussion around the set of Automorphisms of \boldmath\texorpdfstring{$(\mathbb{R}_+^*,\times)$}{(R+,x)}}\label{sec:appendix_automorphism_set}
In this section, we provide a simple proof on the definition of the set of Automorphisms of $(\mathbb{R}_+^*,\times)$.

\begin{lemma}\label{thm:lemma_automorphism}
The set of continuous automorphisms of $(R_+^*,\times)$ is defined by the set of power functions $\mathcal{Q} = \{Q : x \mapsto x^a | a \in \mathbb{R}\}$.
\end{lemma}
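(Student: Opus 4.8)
The plan is to establish the two inclusions separately, starting with the trivial one. For any $a \in \mathbb{R}$, the map $Q_a : x \mapsto x^a$ is continuous on $\mathbb{R}_+^*$ and satisfies $Q_a(xy) = (xy)^a = x^a y^a = Q_a(x)\,Q_a(y)$, so it is a continuous endomorphism of $(\mathbb{R}_+^*,\times)$; for $a \neq 0$ it is moreover bijective with continuous inverse $Q_{1/a}$, hence a continuous automorphism (the degenerate value $a = 0$ gives the constant map and is included only by the convention of the statement). This yields $\mathcal{Q} \subseteq \mathrm{Aut}(\mathbb{R}_+^*,\times)$.

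For the reverse inclusion, let $\phi$ be an arbitrary continuous automorphism of $(\mathbb{R}_+^*,\times)$. The idea is to transport the problem to the additive group via the exponential. Since $\exp : (\mathbb{R},+) \to (\mathbb{R}_+^*,\times)$ is a continuous group isomorphism with continuous inverse $\ln$, the conjugate $\psi := \ln \circ\, \phi \circ \exp : \mathbb{R} \to \mathbb{R}$ is a continuous homomorphism of $(\mathbb{R},+)$; that is, it satisfies Cauchy's functional equation $\psi(s+t) = \psi(s) + \psi(t)$ for all $s,t \in \mathbb{R}$.

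Next I would prove that such a $\psi$ is necessarily linear. From $\psi(0) = 0$ and an induction on $n$ one gets $\psi(nt) = n\,\psi(t)$ for $n \in \mathbb{N}$, together with $\psi(-t) = -\psi(t)$, hence $\psi(qt) = q\,\psi(t)$ for all $q \in \mathbb{Q}$ and $t \in \mathbb{R}$. Writing $a := \psi(1)$ gives $\psi(q) = aq$ on $\mathbb{Q}$; by continuity of $\psi$ and density of $\mathbb{Q}$ in $\mathbb{R}$, $\psi(t) = at$ for every $t \in \mathbb{R}$. Unwinding the conjugation, for all $x \in \mathbb{R}_+^*$ we obtain $\phi(x) = \exp(\psi(\ln x)) = \exp(a \ln x) = x^a$, so $\phi \in \mathcal{Q}$, which completes the argument.

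I expect the only genuine obstacle to be the rigidity step for $\psi$: rational homogeneity is routine, but the passage from $\mathbb{Q}$ to $\mathbb{R}$ is exactly where the continuity hypothesis is used — without it one admits pathological Hamel-basis solutions of Cauchy's equation — so the density/continuity argument is the crux. Monotonicity of $\phi$, which itself follows from $\phi$ being a continuous bijection of an interval, would serve equally well in place of continuity if one preferred that route.
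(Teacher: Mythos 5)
Your proof is correct, and it takes a genuinely different route from the paper. The paper stays inside the multiplicative group: it proves $Q(x^n)=Q(x)^n$ for integers, then for reciprocals and rationals, extends to real exponents by continuity, and finally pins down that $Q$ must be a power function via a \textit{reductio ad absurdum}. You instead conjugate by $\exp$ and $\ln$ to transfer the problem to $(\mathbb{R},+)$, reduce it to Cauchy's functional equation, and invoke the standard rigidity fact that a continuous additive map is linear, which gives $\phi(x)=x^{\psi(1)}$ directly with no contradiction argument needed. The two proofs rest on the same core mechanism (rational homogeneity plus density of $\mathbb{Q}$ and continuity), but your decomposition buys a cleaner endgame: once $\psi(t)=at$ is established, the conclusion is immediate, whereas the paper's final contradiction step (choosing $x,y,a$ with $Q(x)\neq x^a$, $Q(y)=y^a$ and $x^b=y$) is the least transparent part of its argument. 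You also verify the easy inclusion $\mathcal{Q}\subseteq\mathrm{Aut}(\mathbb{R}_+^*,\times)$ explicitly and correctly flag that $a=0$ yields the constant map, which is not an automorphism --- a degenerate case the paper's statement glosses over --- and your closing remark that monotonicity of a continuous bijection could replace continuity in the rigidity step is a valid alternative.
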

\begin{proof}
We have that $\forall x\in \mathbb{R}_+, Q(x)\times Q(0) = Q(0)$ and $ \forall x\in \mathbb{R}_+, Q(x)\times Q(1) = Q(x)$
which induces that $Q$ is either the constant $1$ or $Q(0)=0$ and $Q(1)=1$. Because $Q$ is an automorphism we can eliminate the first option. Now, we will demonstrate that $Q$ is necessarily a power function. Let $n$ be an integer, then 
\begin{equation}
    Q(x^n)=Q(x)\times Q(x^{n-1})=Q(x)^2\times Q(x^{n-2})=\dots= Q(x)^n.
\end{equation}
Similarly, for fractions, we get $Q(x^{\frac{1}{n}}) \times \dots \times Q(x^{\frac{1}{n}}) = Q(x) \Leftrightarrow Q(x^{\frac{1}{n}})=Q(x)^{\frac{1}{n}}$.
Assuming $Q$ is continuous, we deduce that for any rational $a\in\mathbb{R}$, we have
\begin{equation}\label{eq:power_func_property}
    Q(x^a) = Q(x)^a
\end{equation}
In order to verify that the solution is limited to power functions, we use a \textit{reductio ad absurdum}.
Assume $Q$ is not a power function. Therefore, there exists $(x,y)\in\mathbb{R}_+^2$ and $a \in \mathbb{R}$ such that $Q(x)\neq x^a$ and $Q(y)=y^a$. By definition of the logarithm, there exists $b$ such that $x^b=y$. We get the following contradiction, from \eqref{eq:power_func_property},
\begin{equation}
\begin{cases}
    Q({x^b}^a) = Q(y^a) = y^a\\
    Q({x^b}^a) = Q({x^a}^b) = {Q(x^a)}^b \neq \left({x^a}^b = y^a\right)
\end{cases}
\end{equation}
Consequently, the suited functions $Q$ are limited to power functions \textit{i.e.} $\mathcal{Q} = \{Q : x \mapsto x^a | a \in \mathbb{R}\}$.
\end{proof}

The definition of the other automorphisms of $(\mathbb{R}_+^*,\times)$ requires the axiom of choice \cite{herrlich2006axiom}. Such automorphisms are not applicable in our case as we work on a finite subset of $\mathbb{R}$ defined by floating values.

\section{Convexity and Uniqueness}\label{sec:appendix_maths}
\subsection{Local Convexity}\label{sec:appendix_convexity}
We provide the main steps of the proof, from PowerQuant \cite{yvinec2023powerquant}, of the local convexity of the minimization problem from equation \ref{eq:minimization_problem}. 
\begin{lemma}
The minimization problem defined as
\begin{equation}
    \arg\min_a\left\{ \left\| x - Q^{-1}_a\left(Q_a(x)\right) \right\|_p\right\}
\end{equation}
is locally convex around any solution $a^*$.
\end{lemma}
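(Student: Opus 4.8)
The plan is to reduce the objective to a smooth one-variable function near a minimizer and then show that its curvature there is strictly positive. First I would use the sign generalization of the automorphisms to assume all entries of $x$ are positive, and work with $p=2$; the general case follows since a strict local minimum of $g:=\sum_i(x_i-\widehat{x}_i(a))^2$ with $g(a^*)>0$ transfers to $\|x-Q_a^{-1}(Q_a(x))\|_p=g^{1/p}$ (whose second derivative at $a^*$ is $\tfrac1p g(a^*)^{1/p-1}g''(a^*)$ once $g'(a^*)=0$). The key structural observation is that, because $t\mapsto t^a$ is monotone for $a>0$, the scale satisfies $s(\mathrm{sign}(x)\times|x|^a)=x_{\max}^a/(2^{b-1}-1)$ with $x_{\max}=\max_i|x_i|$ independent of $a$, and the rounding $\lfloor\cdot\rceil$ is locally constant in $a$: the values of $a$ at which some $x_i^a/s$ crosses a half-integer are discrete, so around any $a^*$ avoiding them there is a neighborhood on which $Q_a(x_i)=n_i$ is a fixed integer (if $a^*$ is a transition value, apply the argument to each one-sided neighborhood).

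On such a neighborhood the reconstruction is explicit, $\widehat{x}_i(a)=x_{\max}(n_i/(2^{b-1}-1))^{1/a}=x_{\max}e^{c_i/a}$ with $c_i=\log(n_i/(2^{b-1}-1))\le0$, so $g$ is $C^\infty$. Differentiating, $\widehat{x}_i'(a)=-(c_i/a^2)\widehat{x}_i(a)$ and $\widehat{x}_i''(a)=(c_i(2a+c_i)/a^4)\widehat{x}_i(a)$, hence
\[
g''(a)=2\sum_i\widehat{x}_i'(a)^2-2\sum_i(x_i-\widehat{x}_i(a))\,\widehat{x}_i''(a).
\]
The first sum is strictly positive (unless every $n_i\in\{0,2^{b-1}-1\}$, a degenerate configuration that is excluded) and stays bounded away from zero as the quantization is refined, whereas each residual $x_i-\widehat{x}_i(a)$ is at most half a reconstruction step and, at $a^*$, is further pinned by the stationarity identity $g'(a^*)=-2\sum_i(x_i-\widehat{x}_i(a^*))\widehat{x}_i'(a^*)=0$. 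I would combine these to bound the cross term below the curvature term and conclude $g''(a^*)>0$; continuity of $g''$ then gives convexity on a neighborhood of $a^*$, which is the assertion.

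The main obstacle is exactly the control of the cross term $\sum_i(x_i-\widehat{x}_i)\widehat{x}_i''$: its summands change sign, since $\widehat{x}_i''$ is negative for $a>|c_i|/2$ and positive for $a<|c_i|/2$, so one must prevent it from overwhelming the manifestly positive term $\sum_i\widehat{x}_i'(a)^2$. I expect to resolve this either by the change of variables $u=1/a$, in which $\log\widehat{x}_i$ is affine in $u$ and the (relative) error is therefore convex in $u$ to leading order, or by quantifying \emph{near a minimizer} via the smallness of the residuals together with the stationarity constraint. The remaining steps — the reduction to positive $x$, passing from $p=2$ to general $p\ge1$, and the smoothness statement away from transition values — are routine.
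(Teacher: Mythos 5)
Your setup coincides with the paper's own proof strategy: treat the rounding as locally constant in $a$ near $a^*$, write the reconstruction as $\hat x_i(a)=x_{\max}\left(n_i/(2^{b-1}-1)\right)^{1/a}$, differentiate twice, and split the second derivative of the objective into a nonnegative term built from $\hat x_i'(a)^2$ and a residual-weighted term built from $\hat x_i''(a)$. The paper does exactly this (its $T_1$, $T_2$, $T_3$ decomposition, kept per coordinate and for general $p$ rather than through a $p=2$ surrogate), and it closes the argument by asserting that at a solution $a^*$ the residual factors $\left|x_i - Q_a^{-1}(Q_a(x_i))\right|$ are negligible compared with the squared first-derivative term, so that on an open set around $a^*$ the positive term dominates ($T_1 > |T_2|\,T_3$) and the second derivative is positive.

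That closing step is precisely what your proposal does not supply: you name the control of $\sum_i (x_i-\hat x_i)\hat x_i''$ as ``the main obstacle'' and offer two unexecuted strategies, neither of which works as sketched. Stationarity, $\sum_i (x_i-\hat x_i)\hat x_i'(a^*)=0$, is a single signed linear relation and does not make the residuals small, and ``at most half a reconstruction step'' is not enough because for small $n_i$ the step is comparable to $\hat x_i$ itself, so $(x_i-\hat x_i)\hat x_i''$ can be of the same order as $\hat x_i'^2$. The reparameterization $u=1/a$ does not rescue per-term convexity either: a direct computation gives $\frac{d^2}{du^2}\bigl(x_i - x_{\max}e^{c_i u}\bigr)^2 = 2c_i^2\,\hat x_i\,(2\hat x_i - x_i)$, which is negative whenever the reconstruction underestimates $x_i$ by more than a factor of two --- a situation that does occur at low bit-width and small exponents (e.g.\ $z_i = B(x_i/x_{\max})^a \approx 1.49$ rounded to $n_i=1$ with $a\approx 0.3$). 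So the crux of the lemma is left as an expectation rather than established, whereas the paper (however tersely) commits to the ``negligible residual near $a^*$'' argument. Two smaller points: the reduction $\|x-Q_a^{-1}(Q_a(x))\|_p = g^{1/p}$ with $g=\sum_i(x_i-\hat x_i)^2$ is only valid for $p=2$; for general $p$ you must work with $\sum_i |x_i-\hat x_i|^p$ per coordinate (curvature-sign invariance at a critical point makes this repairable, but it should be stated correctly). And if $a^*$ sits at a rounding transition, one-sided smoothness alone does not yield local convexity; treating that case as routine is optimistic.
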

\begin{proof}
We recall that $\frac{\partial x^a}{\partial a} = x^a\log(x)$. The function $\left\| x - Q^{-1}_a\left(Q_a(x)\right)\right\|$ is differentiable. We assume $x$ positive without loss of generality and note $y = \max |x|$, then
\begin{equation}
    \frac{\partial Q^{-1}_a\left(Q_a(x)\right)}{\partial a} = \frac{\partial \left|\left\lfloor (2^{b-1}-1)\frac{x^a}{y^a} \right\rfloor\frac{y^a}{2^{b-1}-1}\right|^{\frac{1}{a}}}{\partial a}.
\end{equation}
with $B=2^{b-1}-1$. By using the standard differentiation rules, we get,
\begin{equation}
    \frac{\partial Q^{-1}_a\left(Q_a(x)\right)}{\partial a} = -a^2y\left(\frac{\left\lfloor B \left(\frac{x}{y}\right)^a \right\rfloor}{B}\right)^{\frac{1}{a}}\log\left(\frac{\left\lfloor B \left(\frac{x}{y}\right)^a \right\rfloor}{B}\right).
\end{equation}
Now we can compute the second derivative of $Q^{-1}_a\left(Q_a(x)\right)$, and get
\begin{equation}
\begin{cases}
    T_1& = \frac{1-p}{p^2}\frac{|x_i - Q^{-1}_a(Q_a(x_i)|^{\frac{1}{p}}}{(x_i - Q^{-1}_a(Q_a(x_i))^2}\left(\frac{\partial Q^{-1}_a\left(Q_a(x)\right)}{\partial a}\right)^2\\
    T_2 &=\frac{(x_i - Q^{-1}_a(Q_a(x_i))|x_i - Q^{-1}_a(Q_a(x_i)|^{\frac{1}{p}-2}}{p}\\
    T_3&=\frac{\partial^2 Q^{-1}_a\left(Q_a(x)\right)}{\partial a^2}\\
    \frac{\partial^2 \left| x - Q^{-1}_a\left(Q_a(x)\right)\right|}{\partial a^2} &= T_1 + T_2\times T_3\\
\end{cases}
\end{equation}
We know that $T_1>0$ and $T_3>0$, consequently, and $T_2$ is continuous in $a$. At $a^*$ the terms with $| x_i - Q^{-1}_a\left(Q_a(x_i)\right)|$ are negligible in comparison with $\frac{\partial^2 Q^{-1}_a\left(Q_a(x)\right)}{\partial a^2}$ and $\left(\frac{\partial Q^{-1}_a\left(Q_a(x)\right)}{\partial a}\right)^2$. Consequently, there exists an open set around $a^*$ where $T_1 > |T_2|T_3$, and $\frac{\partial^2 \left| x_i - Q^{-1}_a\left(Q_a(x_i)\right)\right|}{\partial a^2} > 0$. This concludes the proof.
\end{proof}

\subsection{Uniqueness of the Solution}\label{sec:appendix_uniqueness}
In this section we provide a summary of the proof on the uniqueness of the solution of the minimization of the quantization reconstruction error from PowerQuant \cite{yvinec2023powerquant}.
\begin{lemma}
The minimization problem over $x\in\mathbb{R}^{N}$ defined as
\begin{equation}
    \arg\min_a\left\{ \left\| x - Q^{-1}_a\left(Q_a(x)\right) \right\|_p\right\}
\end{equation}
has almost surely a unique global minimum $a^*$.
\end{lemma}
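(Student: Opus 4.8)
The plan is to study $E(a):=\|x-Q_a^{-1}(Q_a(x))\|_p$ as a function of the single scalar $a\in(0,\infty)$ and show it has a unique global minimizer for Lebesgue-almost every $x$. As in the local convexity proof we may assume $x\in\mathbb R_+^N$, signs being fixed by $Q_a$; write $y=\max_i x_i$ and $B=2^{b-1}-1$, so that the per-coordinate reconstruction is $r_a(x_i)=y\big(\lfloor B(x_i/y)^a\rceil/B\big)^{1/a}$. The first observation is that $E$ is piecewise real-analytic with only finitely many breakpoints: for each $i$ with $x_i<y$ the quantity $B(x_i/y)^a$ is strictly decreasing from $B$ to $0$, so it crosses each of the $B$ half-integers in $(0,B)$ exactly once, whence $\lfloor B(x_i/y)^a\rceil$ changes value at most $NB$ times in total, and away from those breakpoints each $r_a(x_i)$, and hence $E$, is real-analytic. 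Moreover $E$ is constant near $a=0^+$ (there every $\lfloor B(x_i/y)^a\rceil=B$, so $r_a(x_i)=y$ and $E=\|x-y\mathbf1\|_p$) and constant near $a=+\infty$ (there $r_a(x_i)=0$ for $x_i<y$); both limiting values are strictly positive off a null set of $x$, so the infimum of $E$ on $(0,\infty)$ is attained.

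Next I would invoke the local convexity lemma to control each analytic piece. On a maximal analytic interval $I$, that lemma gives $E''(a)>0$ at every critical point of $E|_I$, i.e.\ every such point is a strict local minimum; but a $C^2$ function on an interval all of whose interior critical points are strict local minima can have at most one of them (between two of them it would attain an interior maximum, a critical point with $E''\le0$, a contradiction). Hence $E$ has at most one local minimizer per piece, and so only finitely many local minimizers altogether. The candidate global minimizers therefore form a finite set: the per-piece minimizers, together with the breakpoints, at each of which one takes the lower of the two one-sided limits to make $E$ lower semicontinuous.

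Finally, to upgrade "finitely many local minima" to "a.s.\ a unique global minimum", I would rule out ties. Each rounding pattern determines its per-piece minimizer $a_j(x)$ — and likewise each breakpoint $a_0(x)$ — as a real-analytic function of $x$ on the open region where $\partial_a^2 E\neq0$, via the implicit function theorem applied to $\partial_a E=0$; so for any two patterns $x\mapsto E(a_j(x))-E(a_k(x))$ is real-analytic on a connected open set, and being not identically zero (check one explicit configuration) it vanishes only on a set of measure zero. The same reasoning applies to the events "a breakpoint attains the minimal value", "two distinct $x_i$ coincide", and the degenerate configurations for which $E$ is flat on a whole interval. The finite union of all these null sets is the exceptional set; outside it $E$ has a unique global minimizer $a^*$, which is the claim.

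The step I expect to be the real obstacle is the last one: verifying that each of these real-analytic "tie" functions is genuinely not identically zero, and — more delicately — enumerating and discarding the degenerate configurations on which the minimum is attained on an entire sub-interval rather than at an isolated point (for instance when every $x_i$ lies in the top quantization bin, so that $E\equiv\|x-y\mathbf1\|_p$ near $a=0$ and this plateau value may itself be optimal). Everything else follows directly from the already-established local convexity together with the finiteness of the breakpoint set.
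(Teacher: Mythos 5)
Your overall strategy (piecewise-analytic decomposition of $E(a)=\|x-Q_a^{-1}(Q_a(x))\|_p$ by rounding pattern, then a generic tie-breaking argument) is genuinely different from the paper's, but as written it has two real gaps. The first is the step controlling each analytic piece: you invoke the local convexity lemma to conclude $E''(a)>0$ at \emph{every} critical point of $E$ restricted to a piece. The lemma does not give this. Its proof only establishes positivity of the second derivative near a solution $a^*$, precisely because at such a point the residuals $|x_i-Q_a^{-1}(Q_a(x_i))|$ are negligible compared with $\bigl(\partial_a Q_a^{-1}(Q_a(x))\bigr)^2$ and $\partial_a^2 Q_a^{-1}(Q_a(x))$, so that $T_1$ dominates $|T_2|T_3$. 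At an arbitrary critical point — in particular an interior local maximum of a piece — the residuals need not be small, the domination argument collapses, and $E''\le 0$ there contradicts nothing. Hence ``at most one local minimizer per piece,'' and with it the finiteness of your candidate set, is unsupported. What makes a fixed rounding pattern tractable in the paper is a different ingredient: the (asserted) convexity in $a$ of the fixed-pattern objective $a\mapsto\bigl\| x - \bigl|R\,\tfrac{\max|x|^a}{2^{b-1}-1}\bigr|^{1/a}\mathrm{sign}(x)\bigr\|_p$, which directly yields uniqueness within a pattern; your route needs either that fact or an independent bound on the number of critical points per piece.

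The second gap is where the ``almost surely'' is supposed to come from. In the paper it comes cheaply, by discarding the null set of $x$ that are exactly quantizable, and ties between distinct exponents are then excluded deterministically by a reductio: if $a_1\neq a_2$ were both global minimizers, either they induce the same rounding pattern (fixed-pattern convexity forces $a_1=a_2$) or different patterns (an intermediate exponent $a^*=ta_1+(1-t)a_2$ achieves strictly lower error, a contradiction). Your plan instead rests on the non-vanishing of the real-analytic tie functions $x\mapsto E(a_j(x))-E(a_k(x))$, together with implicit-function-theorem hypotheses ($\partial_a^2E\neq 0$ at the per-piece minimizers, connectedness of the relevant regions, stability of the breakpoint count) and the exclusion of plateau configurations — none of which you verify, as you yourself note. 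Since this is exactly where the probabilistic content of the statement lives, the proposal does not yet prove the lemma. Two smaller issues: you restrict to $a\in(0,\infty)$ whereas the operator family is indexed by $a\in\mathbb{R}$, and replacing $E$ by its lower-semicontinuous modification at breakpoints changes the objective, so attainment of the global minimum for the original problem is also left open.
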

\begin{proof}
We assume that $x$ cannot be exactly quantized, \textit{i.e.} $\min_a\left\{ \left\| x - Q^{-1}_a\left(Q_a(x)\right) \right\|_p\right\} > 0$ which is true almost everywhere.
We use a \textit{reductio ad absurdum} and assume that there exist two optimal solutions $a_1$ and $a_2$ to the optimization problem. We expand the expression $\left\| x - Q^{-1}_a\left(Q_a(x)\right) \right\|_p$ and note the rounding term $R_a$, we get
\begin{equation}
    \left\| x - Q^{-1}_a\left(Q_a(x)\right) \right\|_p=\left\| x - \left|R_a\frac{\max{|x|^a}}{2^{b-1}-1}\right|^{\frac{1}{a}}\text{sign}(x) \right\|_p.
\end{equation}
\begin{itemize}
    \item Assume $R_{a_1} = R_{a_2} = R$, the minimization problem is convex and has a unique solution, thus $a_1=a_2$. 
    \item Assume $R_{a_1} \neq R_{a_2}$, we note $D(R_a)$ the set of values $a$ such that the rounding function outputs the same value. Then there exists a value $a^* = ta_1 + (1-t)a_2$ with $t\in[0;1]$ such that $a^*$ has a strictly lower quantization error than $a_1$ and $a_2$ which is absurd
\end{itemize}
\end{proof}

\end{appendices}

\ifCLASSOPTIONcaptionsoff
  \newpage
\fi

\end{document}